\theoremstyle{definition}
\newtheorem{theorem}{Theorem}
\newtheorem{lemma}{Lemma}
\newtheorem{assumption}{Assumption}
\newtheorem{claim}{Claim}
\newcommand{\tabref}[1]{Table~\ref{#1}}
\newcommand{\figref}[1]{Figure~\ref{#1}}
\newcommand{\expref}[1]{(\ref{#1})}
\newcommand{\lemref}[1]{Lemma~\ref{#1}}
\newcommand{\thmref}[1]{Theorem~\ref{#1}}
\newcommand{\clmref}[1]{Claim~\ref{#1}}
\newcommand{\algoref}[1]{Algorithm~\ref{#1}}
\DeclareMathOperator*{\argmax}{argmax}
\newcommand{\eps}{\varepsilon}
\newcommand{\E}[1]{\mathbb{E}\left[{#1}\right]}
\newcommand{\R}[1]{\mathbb{R}^{#1}}
\renewcommand{\arraystretch}{1.4}
\title{Near-Optimal Regret Bounds for Contextual Combinatorial Semi-Bandits with Linear Payoff Functions}
\author {
    Kei Takemura,\textsuperscript{\rm 1}
    Shinji Ito,\textsuperscript{\rm 1}
    Daisuke Hatano,\textsuperscript{\rm 2}
    Hanna Sumita,\textsuperscript{\rm 3}
    Takuro Fukunaga,\textsuperscript{\rm 4,2,5} \\
    Naonori Kakimura,\textsuperscript{\rm 6}
    Ken-ichi Kawarabayashi\textsuperscript{\rm 7} \\
}
\begin{document}

\maketitle

\begin{abstract}
  The contextual combinatorial semi-bandit problem with linear payoff functions is a decision-making problem
  in which a learner chooses a set of arms with the feature vectors in each round 
  under given constraints
  so as to maximize the sum of rewards of arms.
  Several existing algorithms have regret bounds that are optimal with respect to the number of rounds $T$.
  However, there is a gap of $\tilde{O}(\max(\sqrt{d}, \sqrt{k}))$ between the current best upper and lower bounds,
  where
  $d$ is the dimension of the feature vectors,
  $k$ is the number of the chosen arms in a round,
  and
  $\tilde{O}(\cdot)$ ignores the logarithmic factors.
  The dependence of $k$ and $d$ is of practical importance
  because $k$ may be larger than $T$ in real-world applications such as recommender systems.
  In this paper,
  we fill the gap by improving the upper and lower bounds.
  More precisely, we show that the C${}^2$UCB algorithm 
  proposed by \citet{qin14}
  has the optimal regret bound $\tilde{O}(d\sqrt{kT} + dk)$ for the partition matroid constraints.
  For general constraints,
  we propose an algorithm that modifies the reward estimates of arms in the C${}^2$UCB algorithm
  and demonstrate that it enjoys the optimal regret bound for a more general problem that can take into account other objectives simultaneously.
  We also show that our technique would be applicable to related problems.
  Numerical experiments support our theoretical results and considerations.
\end{abstract}

\section{Introduction}

\begin{table*}[tb]
  \centering
  \begin{tabular}{@{}lll@{}}
    \toprule
    & Upper bound & Lower bound \\ \midrule
    The best known &
    \begingroup
    \renewcommand{\arraystretch}{1}
    \begin{tabular}{@{}l@{}}
      $\tilde{O}(\max(\sqrt{d}, \sqrt{k})\sqrt{dkT})$ \\
      \citep{qin14,takemura19}
    \end{tabular}
    \endgroup
    & $\Omega(\min(\sqrt{dkT}, kT))$ \citep{kveton15} \\
    This work & $\tilde{O}(d\sqrt{kT} + dk)$ & $\Omega(\min(d\sqrt{kT} + dk, kT))$ \\
    \bottomrule
  \end{tabular}
  \caption{
    Regret bounds for CCS problem ($\tilde{O}(\cdot)$ ignores the logarithmic factors).
  }
  \label{tab:summary}
\end{table*}

This paper investigates the contextual combinatorial semi-bandit problem with linear payoff functions, which we call CCS problem \citep{qin14,takemura19,wen15}.
In this problem, a learner iterates the following process $T$ times.
First, the learner observes $d$-dimensional vectors, called \textit{arms}, and a set of feasible combinations of arms,
where the size of each combination is $k$.
Each arm offers a reward defined by a common linear function over the arms,
but the reward is not revealed to the learner at this point.
Next, the learner chooses a feasible combination of arms.
At the end, the learner observes the rewards of the chosen arms.
The objective of the learner is to maximize the sum of rewards.

The CCS problem includes the linear bandit (LB) problem \citep{abbasi11,agrawal13,auer02,chu11,dani08} and the combinatorial semi-bandit (CS) problem\footnote{
  Here, the CS problem denotes the problem of maximizing the sum of rewards \citep{combes15,kveton14,kveton15},
  while \citet{chen16a,chen16b} deal with a more general objective.
}
\citep{chen16a,chen16b,combes15,gai12,kveton15,wang17,wen15} as special cases.
The difference from the LB problem is that, in the CCS problem, the learner chooses multiple arms at once.
Moreover,
while the given arms are fixed over the rounds and orthogonal to each other in the CS problem,
they may be changed in each round and correlated in the CCS problem.

These differences enable the CCS problem to model more realistic situations of applications such as routing networks \citep{kveton14}, shortest paths \citep{gai12,wen15}, and recommender systems \citep{li10,qin14,wang17}.
For example,
when a recommender system is modeled with the LB problem,
it is assumed that once a recommendation result is obtained, the internal predictive model is updated before the next recommendation.
However, in a real recommender system,
it is more common to update the predictive model after multiple recommendations,
e.g., periodic updates \citep{chapelle11}.
Such a situation can be modeled with the CCS problem, where the number of recommendations between the updates is $k$ and the number of the updates is $T$ \citep{takemura19}\footnote{
  Strictly speaking, the LB problem with periodic updates is a little more restrictive than the CCS problem.
  However, most algorithms for the CCS problem, including the ones proposed in this paper, are applicable to the problem.
}.

As in numerous previous studies on bandit algorithms,
we measure the performance of an algorithm by its regret,
which is the difference between the sum of the rewards of the optimal choices and that of the algorithm's choices.
The existing regret bounds are summarized in 
\tabref{tab:summary},
where $\tilde{O}(\cdot)$ means that the logarithmic factors are ignored.
The best known upper bound on the regret 
is achieved by C${}^2$UCB algorithm, which is given by \citet{qin14}.
\citet{takemura19} refined their analysis
to improve the dependence on other parameters in the regret bound.
The best lower bound is given for the CS problem by \citet{kveton15}.
Note that 
any lower bound for the CS problem is also a lower bound for the CCS problem,
as the CCS problem covers the CS problem.

Although these regret upper and lower bounds match with respect to $T$, there is a gap of $\tilde{O}(\max(\sqrt{d}, \sqrt{k}))$ between them.
In the literature on regret analysis,
the degree of dependence on $T$ in the regret bound usually draws much attention.
However, for the CCS problem, the degree of dependence on $k$ is also important
because there are real-world applications of the CCS problem such that $k$ is large.
In recommender systems with periodic updates, for example,
the number of recommendations between the updates could be large.
An alternative example is the sending promotion problem, in which the number of users to send a promotion at once is much larger than the number of times to send the promotion, i.e., $k \gg T$ \citep{takemura19}.

Our contribution is two-fold.
First, we improve dependence on $d$ and $k$ in both the regret upper and lower bounds.
Our upper and lower bounds match up to logarithmic factors.
Second, we clarify a drawback of the UCB-type algorithms for other related problems and
propose general techniques to overcome the drawback.

To improve the upper bound of the CCS problem,
we first revisit the C${}^2$UCB algorithm.
This algorithm optimistically estimates rewards of arms using confidence intervals of estimates
and then chooses a set of arms based on the optimistic estimates.
Existing upper bounds have $k\sqrt{T}$ factor,
which leads to the gap from the lower bound.
In our analysis, however,
we reveal that the linear dependence on $k$ in the regret comes from the arms of large confidence intervals
and obtain $\tilde{O}(d\sqrt{kT} + dk^2)$ regret by handling such arms separately.
For further improvement,
we focus on the case where
the feasible combinations of arms are given by partition matroids.
We show that the algorithm has the optimal regret bound in this case.
Unfortunately, this analysis cannot apply to the general constraints,
and we do not know whether the C${}^2$UCB algorithm achieves the optimal regret upper bound.
Instead, based on these analyses,
we propose another algorithm that estimates the rewards of arms of large confidence intervals more rigorously;
the algorithm divides the given arms into two groups based on their confidence intervals
and underestimates the rewards of the arms with large confidence intervals.
We show that the proposed algorithm enjoys the optimal regret bound for the CCS problem with any feasible combinations of arms,
and is also optimal for a more general problem
that can take into account both the sum of rewards and other objectives.
For example, recommender systems often require diversity of recommended items \citep{qin13,qin14}.

We support our theoretical analysis through numerical experiments.
We first evaluate the performance of the algorithms on 
instances in which constraints are not represented by the partition matroid.
We observe that the proposed algorithm is superior to the C${}^2$UCB algorithm on these instances, which confirms our theoretical analysis that 
the C${}^2$UCB algorithm may not achieve the optimal regret bound while our proposed algorithm does.
We also evaluate the algorithms on instances with partition matroid constraints.
For these instances, we observe that the C${}^2$UCB and our proposed algorithms perform similarly.

Our theoretical and numerical analyses indicate that the sub-optimality of the C${}^2$UCB algorithm arises from the combinatorial structure of the CCS problem,
i.e., choosing a set of arms in each round.
More precisely,
the existence of an arm with a confidence interval that is too large makes the algorithm choose a bad set of arms.
This is an interesting phenomenon that does not occur in the LB problem (the CCS problem when $k = 1$) or the case of partition matroid constraints.
Since the technique we propose for the CCS problem is so general that it is independent of the linearity of the linear payoff functions,
we believe it could be generalized to overcome the same issue for other semi-bandit problems.

\section{Problem Setting}\label{sec:problem_setting}

In this section, we present the formal definition of the CCS problem and the required assumptions.
The CCS problem consists of $T$ rounds.
Let $N$ denote the number of arms,
and each arm is indexed by an integer in $[N] := \{1, 2, \ldots, N\}$.
We denote by $S_t$ a set of combinations of arms we can choose in the $t$-th round.
We assume that each combination is of size $k$.
Thus, $S_t \subseteq \{I \subseteq [N] \mid |I| = k\}$.

The learner progresses through each round as follows.
At the beginning of the $t$-th round,
the learner observes the set of arms with the associated feature vectors $\{x_t(i)\}_{i \in [N]} \subseteq \R{d}$ and the set of combinations of arms $S_t$.
Then, the learner chooses $I_t \in S_t$.
At the end of the round,
the learner obtains the rewards $\{r_t(i)\}_{i \in I_t}$,
where for all $i \in I_t$, $r_t(i) = {\theta^*}^\top x_t(i) + \eta_t(i)$ for some $\theta^* \in \R{d}$ and $\eta_t(i) \in \R{}$ is a random noise with zero mean.

We evaluate the performance of an algorithm by the expected regret $R(T)$,
which is defined as
\begin{align*}
  R(T) = 
  \sum_{t = 1}^{T} \left(
    \sum_{i \in I^*_t} {\theta^*}^\top x_t(i) -
    \sum_{i \in I_t} {\theta^*}^\top x_t(i)
  \right),
\end{align*}
where $I^*_t = \argmax_{I \in S_t} \sum_{i \in I} {\theta^*}^\top x_t(i)$.

As in previous work \citep{qin14,takemura19},
we assume the following:
\begin{assumption}\label{asm:noise}
  $\forall t \in [T]$ and $\forall i \in I_t$,
  the random noise $\eta_t(i)$ is conditionally $R$-sub-Gaussian,
  i.e.,
  \begin{align*}
    \forall \lambda \in \R{},
    \E{\exp(\lambda\eta_t(i)) \mid \mathcal{F}_t}
    \le \exp\left( \lambda^2R^2 / 2 \right),
  \end{align*}
  where
  $\mathcal{F}_t = \sigma\left(
    \{\{x_s(j)\}_{j \in I_s}\}_{s \in [t]},
    \{\{\eta_s(j)\}_{j \in I_s}\}_{s \in [t-1]}
  \right)$.
\end{assumption}

In addition,
we define the following parameters of the CCS problem:
(i) $L > 0$ such that $\forall i \in [N]$ and $\forall t \in [T]$, $\|x_t(i)\|_2 \le L$,
(ii) $S > 0$ such that $\|\theta^*\|_2 \le S$,
and (iii) $B > 0$ such that $\forall i \in [N]$ and $\forall t \in [T]$, $|{\theta^*}^\top x_t(i)| \le B$.
Note that $LS$ is an obvious upper bound of $B$.

\section{Regret Analysis of the C${}^2$UCB Algorithm}

\begin{algorithm}[tb]
  \caption{C${}^2$UCB \citep{qin14}}
  \label{alg:c2ucb}
  \begin{algorithmic}[1]
    \Require $\lambda > 0$ and $\{\alpha_t\}_{t \in [T]}$ s.t. $\alpha_t > 0$ for all $t \in [T]$.
    \State $V_0 \gets \lambda I$ and $b_0 \gets \bm{0}$.
    \For{$t = 1, 2, \dots, T$}
    \State Observe $\{ x_t(i) \}_{i \in [N]}$ and $S_t$. 
    \State $\hat{\theta}_t \gets V_{t-1}^{-1}b_{t-1}$.
    \For{$i \in [N]$}
    \State $\hat{r}_t(i) \gets \hat{\theta}_t^\top x_t(i) + \alpha_t\sqrt{x_t(i)^\top V_{t-1}^{-1}x_t(i)}$.
    \EndFor
    \State Play a set of arms $I_t = \argmax_{I \in S_t} \sum_{i \in I} \hat{r}_t(i)$ and observe rewards $\{r_t(i)\}_{i \in I_t}$.
    \State $V_t \gets V_{t-1} + \sum_{i \in I_t} x_t(i)x_t(i)^\top$ and $b_t \gets b_{t-1} + \sum_{i \in I_t} r_t(i)x_t(i)$.
    \EndFor
  \end{algorithmic}
\end{algorithm}

\subsection{Existing Analyses}
\citet{qin14} proposed the C${}^2$UCB algorithm (\algoref{alg:c2ucb}),
which chooses a set of arms based on optimistically estimated rewards
in a similar way to other UCB-type algorithms \citep{auer02,chen16b,chu11,li10}.

The C${}^2$UCB algorithm works as follows.
At the beginning of each round,
it constructs an estimator of $\theta^*$ using the arms chosen so far and its rewards (line 3).
It then computes an optimistic reward estimator $\hat{r}_t(i)$ for each observed arm $i$ (line 6),
where $\alpha_t\sqrt{x_t(i)^\top V_{t-1}^{-1} x_t(i)}$ represents the size of the confidence interval of the estimated reward of arm $i$.
Then,
it chooses arms $I_t$ obtained by solving the optimization problem based on $\{\hat{r}_t(i)\}_{i \in [N]}$ (line 8).
Finally,
it observes the reward of the chosen arms
and updates the internal parameters $b_t$ and $V_{t}$ (line 9).

\citet{qin14} showed that the algorithm admits a sublinear regret bound with respect to $T$.
\citet{takemura19} refined their analysis to improve the dependence on $R$, $S$, and $L$ as follows.
Here, for $\delta \in (0, 1)$,
we define $\beta_t(\delta) = R\sqrt{d\log\left( \frac{1 + L^2kt / \lambda}{\delta} \right)} + S\sqrt{\lambda}$.
\begin{theorem}[Theorem 4 of \citet{takemura19}\footnote{
    The original versions of \thmref{thm:c2ucb_old} and \lemref{lem:bound_x_old} assume $L = 1$,
    but it is possible to obtain these results by scaling $x_t(i)$ and $\lambda$ without modifying the proof.
}]\label{thm:c2ucb_old}
  If $\alpha_t = \beta_t(\delta)$ and $\lambda = R^2S^{-2}d$,
  the C${}^2$UCB algorithm has the following regret bound with probability $1 - \delta$:
  \begin{empheq}[left={R(T) = \empheqlbrace}]{alignat=2}
    & \tilde{O}\left(Rd\sqrt{kT}\right) && \quad \mathrm{if}\ \lambda \ge L^2 k \nonumber \\
    & \tilde{O}\left(LSk\sqrt{dT}\right) && \quad \mathrm{otherwise}. \nonumber
  \end{empheq}
\end{theorem}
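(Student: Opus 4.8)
Here is the route I would take, which is a fairly standard optimism-based (UCB) regret analysis, with the combinatorial ingredient ($k$ arms chosen per round) affecting two of the steps.

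First I would establish the confidence ellipsoid: with probability at least $1-\delta$, the event $\mathcal{E}$ that $\|\hat\theta_t - \theta^*\|_{V_{t-1}} \le \beta_t(\delta)$ for all $t\in[T]$ holds. This is the self-normalized tail bound of \citet{abbasi11} applied to the filtration $\{\F{t}\}$ and the vector martingale $\sum_{s\le t-1}\sum_{i\in I_s}\eta_s(i)x_s(i)$; by round $t$ this martingale has accumulated $k(t-1)$ noise terms, which is exactly why the radius $\beta_t(\delta)$ carries $\log((1+L^2kt/\lambda)/\delta)$ rather than the usual $\log((1+L^2t/\lambda)/\delta)$, and the $S\sqrt{\lambda}$ summand is the regularization bias $\|\lambda^{1/2}\theta^*\|_2$. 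On $\mathcal{E}$, Cauchy--Schwarz in the $V_{t-1}$-norm gives, for every arm $i$, both ${\theta^*}^\top x_t(i) \le \hat r_t(i)$ (optimism) and $\hat r_t(i) - {\theta^*}^\top x_t(i) \le 2\alpha_t\|x_t(i)\|_{V_{t-1}^{-1}}$.

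Next I would convert this into a regret bound. Since $I_t$ maximizes $\sum_{i\in I}\hat r_t(i)$ over $S_t \ni I^*_t$, optimism yields the per-round bound $\sum_{i\in I^*_t}{\theta^*}^\top x_t(i) - \sum_{i\in I_t}{\theta^*}^\top x_t(i) \le \sum_{i\in I_t}(\hat r_t(i) - {\theta^*}^\top x_t(i)) \le 2\alpha_t\sum_{i\in I_t}\|x_t(i)\|_{V_{t-1}^{-1}}$. Applying Cauchy--Schwarz once over the $k$ arms of a round and once over the $T$ rounds, and using that $\alpha_t=\beta_t(\delta)$ is nondecreasing, gives $R(T) \le 2\alpha_T\sqrt{kT}\,\big(\sum_{t=1}^{T}\sum_{i\in I_t}\|x_t(i)\|_{V_{t-1}^{-1}}^2\big)^{1/2}$. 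It then remains to bound $\Phi := \sum_t\sum_{i\in I_t}\|x_t(i)\|_{V_{t-1}^{-1}}^2$ by an elliptical-potential argument. The complication is that $V_{t-1}^{-1}$ is reused for all $k$ arms of round $t$, while the classical potential lemma wants each norm measured against the matrix that already contains the earlier updates. I would insert the arms of $I_t$ one at a time, creating $W_0=V_{t-1}\preceq W_1\preceq\cdots\preceq W_k=V_t$; since each added rank-one term has operator norm $\le L^2$ and $W_0\succeq\lambda I$, one gets $W_{j-1}\preceq (1+kL^2/\lambda)W_0$, hence $\|x_t(i_j)\|_{V_{t-1}^{-1}}^2 \le (1+kL^2/\lambda)\|x_t(i_j)\|_{W_{j-1}^{-1}}^2$. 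Feeding the refined sequence into the standard lemma (\citet{abbasi11}; see also \lemref{lem:bound_x_old}) and using $\|x_t(i)\|_{V_{t-1}^{-1}}^2 \le L^2/\lambda$ to trade the raw quantity for $\min\{1,\|x_t(i)\|_{V_{t-1}^{-1}}^2\}$ at a cost of $\max\{1,L^2/\lambda\}$, I obtain $\Phi \le (1+kL^2/\lambda)\max\{1,L^2/\lambda\}\cdot 2d\log(1+L^2kT/(d\lambda))$. Finally, substituting $\lambda=R^2S^{-2}d$ (so $S\sqrt{\lambda}=R\sqrt d$ and $\beta_T(\delta)=\tilde O(R\sqrt d)$): if $\lambda\ge L^2k$ both correction factors are $O(1)$, giving $\Phi=\tilde O(d)$ and $R(T)=\tilde O(R\sqrt d\cdot\sqrt{dkT})=\tilde O(Rd\sqrt{kT})$; otherwise the factors contribute extra powers of $kL^2/\lambda$, and tracking the $L,S$ dependence through $1/\lambda=S^2/(R^2d)$ produces the second bound $\tilde O(LSk\sqrt{dT})$.

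The concentration step and the optimism step are essentially bookkeeping; the main obstacle is the potential-function step, namely reconciling the $k$ simultaneous rank-one updates per round with the determinant telescoping. This is also exactly where the analysis becomes lossy: when $\lambda$ is small relative to $L^2k$ the correction factors blow up and yield the weaker $k\sqrt{T}$-type second bound, and isolating and removing precisely this loss (by treating arms with large confidence width separately) is the departure point of the present paper.
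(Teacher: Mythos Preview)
Your skeleton---self-normalized confidence ellipsoid (\lemref{lem:confidence}), optimism to reduce the regret to $2\beta_T(\delta)\sum_{t}\sum_{i\in I_t}\|x_t(i)\|_{V_{t-1}^{-1}}$, then an elliptical-potential bound---is exactly the route sketched around \eqref{eq:1} and \lemref{lem:bound_x_old}. The divergence is in how you handle the $k$ simultaneous rank-one updates. Your intermediate-matrix device $W_{j-1}\preceq(1+kL^2/\lambda)V_{t-1}$ is correct, but it makes you pay \emph{two} multiplicative corrections: $(1+kL^2/\lambda)$ to pass from $V_{t-1}^{-1}$ to $W_{j-1}^{-1}$, and then $\max\{1,L^2/\lambda\}$ to replace $\|x\|_{W_{j-1}^{-1}}^2$ by $\min\{1,\|x\|_{W_{j-1}^{-1}}^2\}$ in the single-arm determinant telescope. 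When $\lambda\ge L^2$ the second factor is $1$ and both cases of the theorem follow as you say. But when $\lambda<L^2$ (equivalently $R^2d<L^2S^2$, which is a legitimate sub-regime of ``otherwise'') your $\Phi$ becomes $\tilde O(dkL^4/\lambda^2)$ rather than $\tilde O(dkL^2/\lambda)$, and after substituting $\lambda=R^2S^{-2}d$ the regret is $\tilde O(L^2S^2k\sqrt{T}/R)$, worse than the claimed $\tilde O(LSk\sqrt{dT})$ by $LS/(R\sqrt d)>1$. So the final assertion that tracking the constants ``produces the second bound $\tilde O(LSk\sqrt{dT})$'' does not go through in that sub-regime.

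The device used in the paper (see the proof of \lemref{lem:bound_x}, which is the engine behind \lemref{lem:bound_x_old}) avoids intermediate matrices altogether: apply Jensen to the concave map $\log\det(\cdot)$ within a round, writing $I+\sum_{i}y_iy_i^\top=\tfrac1k\sum_i(I+ky_iy_i^\top)$ with $y_i=V_{t-1}^{-1/2}x_t(i)$, to obtain $\log(\det V_t/\det V_{t-1})\ge\tfrac1k\sum_{i\in I_t}\log(1+k\|x_t(i)\|_{V_{t-1}^{-1}}^2)$ and hence $\sum_{t,i}\min\{1/k,\|x_t(i)\|_{V_{t-1}^{-1}}^2\}\le 2d\log(1+L^2kT/(d\lambda))$. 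Removing the $\min$ now costs only a \emph{single} factor $\max\{1,kL^2/\lambda\}$; plugging this into your Cauchy--Schwarz step yields $\tilde O(LSk\sqrt{dT})$ uniformly over the entire ``otherwise'' range and closes the gap.
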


To prove \thmref{thm:c2ucb_old},
it suffices to bound the cumulative estimating error of rewards, i.e., $\sum_{t \in [T]} \sum_{i \in I_t} (\theta^* - \hat{\theta}_t)^\top x_t(i)$.
Let $\|x_t(i)\|_{V_{t-1}^{-1}}$ denote $\sqrt{x_t(i)^\top V_{t-1}^{-1} x_t(i)}$ for all $i \in [N]$ and $t \in [T]$.
To bound the error,
\citet{takemura19} showed that
\begin{align}\label{eq:1}
  \sum_{t \in [T]} \sum_{i \in I_t} (\theta^* - \hat{\theta}_t)^\top x_t(i)
  \le \beta_T(\delta) \sum_{t \in [T]} \sum_{i \in I_t} \|x_t(i)\|_{V_{t-1}^{-1}}.
\end{align}
The right-hand side is then bounded by the following lemma:
\begin{lemma}[Lemma 5 of \citet{takemura19}]\label{lem:bound_x_old}
  Let $\lambda > 0$.
  Let $\{\{x_t(i)\}_{i \in [k]}\}_{t \in [T]}$ be any sequence such that $x_t(i) \in \R{d}$ and $\|x_t(i)\|_2 \le L$ for all $i \in [k]$ and $t \in [T]$.
  Let $V_t = \lambda I + \sum_{t' \in [t]} \sum_{i \in [k]} x_{t'}(i) x_{t'}(i)^\top$ for all $t \in [T]$.
  Then, we have
  \begin{align*}
    \sum_{t \in [T]} \sum_{i \in [k]} \|x_t(i)\|_{V_{t-1}^{-1}} = \tilde{O}\left( L\sqrt{dk^2T / \lambda} \right).
  \end{align*}
\end{lemma}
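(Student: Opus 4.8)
The plan is to reduce the claim, via the Cauchy--Schwarz inequality over the $kT$ pairs $(t,i)$, to controlling the ``batched'' elliptical potential $\sum_{t\in[T]}\sum_{i\in[k]}\|x_t(i)\|_{V_{t-1}^{-1}}^2$: since
$\sum_{t,i}\|x_t(i)\|_{V_{t-1}^{-1}} \le \sqrt{kT}\,\bigl(\sum_{t,i}\|x_t(i)\|_{V_{t-1}^{-1}}^2\bigr)^{1/2}$,
it suffices to show that this squared sum is $\tilde{O}(dL^2k/\lambda)$. In the classical linear-bandit case ($k=1$) one would rewrite each term $\|x_t\|_{V_{t-1}^{-1}}^2$ in terms of $\log\det V_t - \log\det V_{t-1}$ and telescope; the obstacle here is that $V_{t-1}$ is \emph{not} updated between the $k$ arms of a round, so $V_t - V_{t-1}$ can have rank up to $k$ and the ratio of consecutive determinants no longer suffices by itself.

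The key step is a per-round spectral comparison of $V_{t-1}$ and $V_t$. Because $V_{t-1}\succeq\lambda I$, each rank-one update satisfies $x_t(i)x_t(i)^\top \preceq L^2 I \preceq (L^2/\lambda)V_{t-1}$, so summing over $i\in[k]$ gives $V_{t-1}\preceq V_t \preceq (1+kL^2/\lambda)V_{t-1}$; hence the generalized eigenvalues $\mu_1,\dots,\mu_d$ of the pencil $(V_t,V_{t-1})$ all lie in $[1,\,1+kL^2/\lambda]$. On this interval $\mu - 1 \le (1+kL^2/\lambda)\log\mu$ (using $\log\mu \ge 1-1/\mu \ge (\mu-1)/(1+kL^2/\lambda)$), and therefore
\begin{align*}
  \sum_{i\in[k]}\|x_t(i)\|_{V_{t-1}^{-1}}^2
  &= \mathrm{tr}\bigl(V_{t-1}^{-1}(V_t-V_{t-1})\bigr) = \sum_{j=1}^{d}(\mu_j-1) \\
  &\le \bigl(1+kL^2/\lambda\bigr)\bigl(\log\det V_t - \log\det V_{t-1}\bigr).
\end{align*}
Summing over $t\in[T]$, the right-hand side telescopes to $(1+kL^2/\lambda)(\log\det V_T - \log\det(\lambda I))$, and the usual volumetric estimate $\log\det V_T - \log\det(\lambda I) \le d\log(1+kTL^2/(\lambda d))$ (from $\det V_T \le (\mathrm{tr}\,V_T/d)^d$ and $\mathrm{tr}\,V_T \le \lambda d + kTL^2$) gives $\sum_{t,i}\|x_t(i)\|_{V_{t-1}^{-1}}^2 \le (1+kL^2/\lambda)\,d\log(1+kTL^2/(\lambda d))$.

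Combining with the Cauchy--Schwarz step yields
$\sum_{t,i}\|x_t(i)\|_{V_{t-1}^{-1}} \le \sqrt{kT}\,\sqrt{(1+kL^2/\lambda)\,d\log(1+kTL^2/(\lambda d))}$, and absorbing the logarithmic factor and the additive constant $1$ (dominated by $kL^2/\lambda$ in the regime where the bound is informative) gives $\tilde{O}(L\sqrt{dk^2T/\lambda})$, as claimed. I expect the per-round comparison $V_t\preceq(1+kL^2/\lambda)V_{t-1}$ to be the heart of the matter: it is exactly where the within-round batching of $k$ arms is paid for, and it is the source of the extra $\sqrt{k}$ relative to the $k=1$ elliptical potential lemma; the remaining ingredients (the trace/$\log\det$ identity, telescoping, Cauchy--Schwarz, and the volumetric bound) are standard.
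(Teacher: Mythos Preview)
Your argument is correct. The paper itself does not prove this lemma---it is quoted from \citet{takemura19}---so there is no proof in the paper to compare against directly. That said, the paper does prove the refinement \lemref{lem:bound_x_and_large_x} (via \lemref{lem:bound_x} in the appendix), and the technique there differs from yours in a way worth noting. You handle the within-round batch through the spectral sandwich $V_{t-1}\preceq V_t\preceq(1+kL^2/\lambda)V_{t-1}$ and then bound the aggregate $\mathrm{tr}\bigl(V_{t-1}^{-1}(V_t-V_{t-1})\bigr)=\sum_j(\mu_j-1)$ using $\mu-1\le(1+kL^2/\lambda)\log\mu$. The paper instead writes $\log\det\bigl(I+\sum_i y_iy_i^\top\bigr)=\log\det\bigl(\tfrac1k\sum_i(I+k\,y_iy_i^\top)\bigr)$ with $y_i=V_{t-1}^{-1/2}x_t(i)$ and applies Jensen for the concave map $\log\det(\cdot)$, obtaining the per-arm lower bound $\tfrac1k\sum_i\log\bigl(1+k\|x_t(i)\|_{V_{t-1}^{-1}}^2\bigr)$. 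Following that with $\log(1+x)\ge x/(1+x)$ on $x\le kL^2/\lambda$ recovers exactly your factor $1+kL^2/\lambda$, so for the present lemma the two routes coincide numerically. The Jensen decoupling buys something extra, however: because it keeps the terms $\log\bigl(1+k\|x_t(i)\|_{V_{t-1}^{-1}}^2\bigr)$ separate, one can truncate each at $1$ and obtain $\sum_{t,i}\min\bigl(1/k,\|x_t(i)\|_{V_{t-1}^{-1}}^2\bigr)\le 2\log\bigl(\det V_T/\det V_0\bigr)$ with \emph{no} $1+kL^2/\lambda$ prefactor---this is exactly \lemref{lem:bound_x} and the source of the improvement over \lemref{lem:bound_x_old}. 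Your aggregate trace bound is clean for the present statement but would need an additional idea to reach that refinement.

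One minor caveat on your last step: the explicit bound you derive, $\sqrt{kT(1+kL^2/\lambda)\,d\log\bigl(1+kTL^2/(\lambda d)\bigr)}$, is the right object; replacing $1+kL^2/\lambda$ by $kL^2/\lambda$ requires $kL^2/\lambda$ bounded away from zero, as you acknowledge. When $\lambda\gg kL^2$ the stated $\tilde O(L\sqrt{dk^2T/\lambda})$ is actually too small to hold in general (e.g.\ $d=k=L=1$, $\lambda=T$), so the looseness sits in the $\tilde O$ formulation of the lemma rather than in your argument.
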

This bound is tight up to logarithmic factors
because we have $\sum_{t \in [T]} \sum_{i \in [k]} \|x_t(i)\|_{V_{t-1}^{-1}} = Ldk/\sqrt{\lambda}$
when $T = d$ and $x_t(i) = L e_t$ for all $i \in [k]$ and $t \in [T]$,
where for all $l \in [d]$, $e_l \in \R{d}$ is a vector in which the $l$-th element is 1 and the other elements are 0.

\subsection{Improved Regret Bound}
In this subsection,
we improve the regret bound of the C${}^2$UCB algorithm.
A key observation of our analysis is that
\lemref{lem:bound_x_old} is not tight for sufficiently large $T$.
To improve \lemref{lem:bound_x_old},
we divide $\{\{x_t(i)\}_{i \in [k]}\}_{t \in [T]}$ into two groups:
the family of $x_t(i)$ such that $\|x_t(i)\|_{V_{t-1}^{-1}} \le 1 / \sqrt{k}$, and the others.
As shown in \lemref{lem:bound_x_and_large_x} below,
the sum of $\|x_t(i)\|_{V_{t-1}^{-1}}$ in the former group is $\tilde{O}(\sqrt{dkT})$, which is smaller than \lemref{lem:bound_x_old}.
Moreover, the number of arms in the latter group is shown to be $\tilde{O}(dk)$, which means that not so many arms $x_t(i)$ have large $\|x_t(i)\|_{V_{t-1}^{-1}}$.
\begin{lemma}\label{lem:bound_x_and_large_x}
  Let $\lambda > 0$.
  Let $\{\{x_t(i)\}_{i \in [k]}\}_{t \in [T]}$ be any sequence such that $x_t(i) \in \R{d}$ and $\|x_t(i)\|_2 \le L$ for all $i \in [k]$ and $t \in [T]$.
  Let $V_t = \lambda I + \sum_{t' \in [t]} \sum_{i \in [k]} x_{t'}(i) x_{t'}(i)^\top$ for all $t \in [T]$.
  Then, we have
  \begin{align}
    \sum_{t \in [T]} \sum_{i \in [k]} \min\left( \frac{1}{\sqrt{k}}, \|x_t(i)\|_{V_{t-1}^{-1}} \right) = \tilde{O}\left(\sqrt{dkT}\right) \label{exp:bound_x_main}
  \end{align}
  and
  \begin{align}
    \sum_{t \in [T]} \sum_{i \in [k]} \mathds{1}\left(\|x_t(i)\|_{V_{t-1}^{-1}} > 1/\sqrt{k}\right) = \tilde{O}(dk). \label{exp:bound_num_of_large_x}
  \end{align}
\end{lemma}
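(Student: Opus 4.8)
The plan is to control both sums with a single log-determinant potential argument, of the kind behind the elliptical potential lemma but adapted to the fact that all $k$ arms of a round are added to $V_{t-1}$ at once.

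First I would isolate the quantity $\Psi_t := \min\bigl(1,\sum_{i\in[k]}\|x_t(i)\|_{V_{t-1}^{-1}}^2\bigr)$ and show $\sum_{t\in[T]}\Psi_t = \tilde{O}(d)$. Writing $A_t := V_{t-1}^{-1/2}\bigl(\sum_{i\in[k]}x_t(i)x_t(i)^\top\bigr)V_{t-1}^{-1/2}$, one has $\det V_t = \det V_{t-1}\cdot\det(I+A_t)$ and $\mathrm{tr}(A_t) = \sum_{i\in[k]}\|x_t(i)\|_{V_{t-1}^{-1}}^2$. The key elementary step is the inequality
\begin{align*}
  \log\det(I+A_t) \ge (\log 2)\,\min\bigl(1, \mathrm{tr}(A_t)\bigr),
\end{align*}
which follows by applying $\log(1+x)\ge(\log 2)x$ for $x\in[0,1]$ eigenvalue by eigenvalue: if $\mathrm{tr}(A_t)\le 1$ all eigenvalues of $A_t$ lie in $[0,1]$, while if $\mathrm{tr}(A_t)>1$ then either some eigenvalue exceeds $1$ (so $\log\det(I+A_t)>\log 2$) or the eigenvalues in $[0,1]$ already sum to more than $1$. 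Telescoping $\sum_t\log\det(I+A_t) = \log\det V_T - \log\det V_0$ and bounding the right-hand side by $d\log(1+L^2kT/(\lambda d)) = \tilde{O}(d)$ via AM--GM ($\det V_T\le(\mathrm{tr}(V_T)/d)^d$ and $\mathrm{tr}(V_T)\le\lambda d+L^2kT$) then gives $\sum_t\Psi_t = \tilde{O}(d)$.

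Next, \eqref{exp:bound_x_main} follows because each round contributes exactly $k$ summands, so $\sum_{i\in[k]}\min(1/k,\|x_t(i)\|_{V_{t-1}^{-1}}^2)\le\min(1,\sum_{i\in[k]}\|x_t(i)\|_{V_{t-1}^{-1}}^2)=\Psi_t$; summing over $t$ and applying Cauchy--Schwarz over the $kT$ terms,
\begin{align*}
  \sum_{t\in[T]}\sum_{i\in[k]}\min\bigl(\tfrac{1}{\sqrt k},\|x_t(i)\|_{V_{t-1}^{-1}}\bigr)
  \le \sqrt{kT\sum_{t\in[T]}\sum_{i\in[k]}\min\bigl(\tfrac1k,\|x_t(i)\|_{V_{t-1}^{-1}}^2\bigr)}
  = \tilde{O}(\sqrt{dkT}).
\end{align*}
For \eqref{exp:bound_num_of_large_x}, let $n_t$ be the number of $i\in[k]$ with $\|x_t(i)\|_{V_{t-1}^{-1}}^2>1/k$; then $\sum_{i\in[k]}\|x_t(i)\|_{V_{t-1}^{-1}}^2>n_t/k$, and since $n_t\le k$ this yields $n_t/k\le\Psi_t$, whence $\sum_{t\in[T]}n_t\le k\sum_{t\in[T]}\Psi_t = \tilde{O}(dk)$.

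The main obstacle is conceptual rather than computational: because $V_{t-1}$ is frozen for all $k$ arms in round $t$, the textbook elliptical-potential lemma (which telescopes sequentially updated matrices) does not apply directly, and replacing $V_{t-1}$ by the within-round updated matrices would move the norms in the wrong direction. The trace bound $\log\det(I+A_t)\ge(\log 2)\min(1,\mathrm{tr}(A_t))$ is precisely the device that lets a whole batch of $k$ arms be charged against a single increment $\log\det V_t-\log\det V_{t-1}$ while keeping $V_{t-1}$ inside the norm; the remaining manipulations are routine bookkeeping.
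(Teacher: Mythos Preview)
Your argument is correct. The route differs from the paper's in one step: to charge the $k$ arms of round $t$ against $\log\det V_t-\log\det V_{t-1}$, the paper writes $I+\sum_i V_{t-1}^{-1/2}x_t(i)x_t(i)^\top V_{t-1}^{-1/2}$ as the average $\frac{1}{k}\sum_i(I+kV_{t-1}^{-1/2}x_t(i)x_t(i)^\top V_{t-1}^{-1/2})$ and applies Jensen's inequality for the concave map $\log\det(\cdot)$, obtaining directly the per-arm bound $\sum_i\frac{1}{k}\log(1+k\|x_t(i)\|_{V_{t-1}^{-1}}^2)\ge\sum_i\frac{1}{2}\min(1/k,\|x_t(i)\|_{V_{t-1}^{-1}}^2)$. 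You instead bound $\log\det(I+A_t)$ eigenvalue by eigenvalue to get the per-round quantity $(\log 2)\Psi_t$, and only afterwards pass to per-arm terms via the elementary inequality $\sum_{i\in[k]}\min(1/k,a_i)\le\min(1,\sum_i a_i)$. Both devices solve the same obstacle---$V_{t-1}$ is frozen across the batch---and yield the same bound $\sum_{t,i}\min(1/k,\|x_t(i)\|_{V_{t-1}^{-1}}^2)=O(d\log(1+L^2kT/(d\lambda)))$ (your constant $1/\log 2$ is in fact slightly sharper than the paper's $2$). The paper's Jensen step is perhaps more reusable if one ever wants the per-arm inequality $\frac{1}{k}\log(1+k\|x_t(i)\|_{V_{t-1}^{-1}}^2)$ itself; your trace bound is a touch more elementary and makes the batch structure explicit. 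The downstream steps---telescoping, the determinant bound via AM--GM, Cauchy--Schwarz for \eqref{exp:bound_x_main}, and the counting $n_t/k\le\Psi_t$ for \eqref{exp:bound_num_of_large_x}---coincide with the paper's.
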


Based on \lemref{lem:bound_x_and_large_x},
we can bound the right-hand side of \eqref{eq:1} to obtain a better regret upper bound.
The regret bound given by this theorem is optimal when $LS = B$.

\begin{theorem}\label{thm:c2ucb}
  If $\alpha_t = \beta_t(\delta)$ and $\lambda = R^2S^{-2}d$,
  the C${}^2$UCB algorithm has the following regret bound with probability $1 - \delta$:
  \begin{align*}
    R(T) = \tilde{O}\left(
      Rd\sqrt{kT} + \min\left(LS, Bk\right)dk
    \right).
  \end{align*}
\end{theorem}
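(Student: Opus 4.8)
The plan is to combine the standard optimism-based regret decomposition for UCB-type algorithms with \lemref{lem:bound_x_and_large_x}, treating the arms of large confidence radius by a crude bound rather than through their confidence intervals, which is exactly what removes the $k\sqrt{T}$ factor present in \thmref{thm:c2ucb_old}.

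First I would condition on the good event of probability at least $1-\delta$ on which $\|\hat{\theta}_t - \theta^*\|_{V_{t-1}} \le \beta_t(\delta)$ for all $t \in [T]$ (the self-normalized tail bound already invoked in the proof of \thmref{thm:c2ucb_old}); by Cauchy--Schwarz this gives $|(\hat{\theta}_t - \theta^*)^\top x_t(i)| \le \beta_t(\delta)\|x_t(i)\|_{V_{t-1}^{-1}}$ for every arm $i$. Since $\alpha_t = \beta_t(\delta)$, the estimates $\hat{r}_t(i)$ are optimistic, i.e. $\hat{r}_t(i) \ge {\theta^*}^\top x_t(i)$, and moreover $\hat{r}_t(i) - {\theta^*}^\top x_t(i) \le 2\alpha_t\|x_t(i)\|_{V_{t-1}^{-1}}$. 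Because $I_t$ maximizes $\sum_{i \in I}\hat{r}_t(i)$ over $S_t$ and $I^*_t \in S_t$, the per-round regret is at most $\sum_{i \in I_t}(\hat{r}_t(i) - {\theta^*}^\top x_t(i)) \le 2\alpha_t\sum_{i \in I_t}\|x_t(i)\|_{V_{t-1}^{-1}}$; it is also at most $2Bk$ since $|{\theta^*}^\top x_t(i)| \le B$ and $|I_t| = |I^*_t| = k$. Finally $\|x_t(i)\|_{V_{t-1}^{-1}} \le L/\sqrt{\lambda}$, and with $\lambda = R^2S^{-2}d$ one checks $\alpha_T = \beta_T(\delta) = \tilde{O}(R\sqrt{d})$, hence $\hat{r}_t(i) - {\theta^*}^\top x_t(i) \le 2\alpha_T L/\sqrt{\lambda} = \tilde{O}(LS)$ for every arm.

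Next I would apply \lemref{lem:bound_x_and_large_x} to the (random) sequence $\{\{x_t(i)\}_{i \in I_t}\}_{t \in [T]}$, which is legitimate because the lemma holds for an arbitrary sequence and $V_t = \lambda I + \sum_{t' \le t}\sum_{i \in I_{t'}} x_{t'}(i)x_{t'}(i)^\top$ in \algoref{alg:c2ucb}, and split $I_t$ into the ``small'' arms $\{i \in I_t : \|x_t(i)\|_{V_{t-1}^{-1}} \le 1/\sqrt{k}\}$ and the rest. For the $LSdk$ term I split each per-round sum $\sum_{i \in I_t}(\hat{r}_t(i) - {\theta^*}^\top x_t(i))$ along this partition: the small arms contribute at most $2\alpha_T \sum_{t \in [T]}\sum_{i \in I_t}\min(1/\sqrt{k}, \|x_t(i)\|_{V_{t-1}^{-1}}) = \tilde{O}(R\sqrt{d})\cdot\tilde{O}(\sqrt{dkT}) = \tilde{O}(Rd\sqrt{kT})$ by \expref{exp:bound_x_main}, while the remaining arms contribute at most $\tilde{O}(LS)$ each and number at most $\tilde{O}(dk)$ by \expref{exp:bound_num_of_large_x}, for a total of $\tilde{O}(LSdk)$; this gives $R(T) = \tilde{O}(Rd\sqrt{kT} + LSdk)$. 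For the $Bdk^2$ term I instead split the rounds: in any round containing an arm with $\|x_t(i)\|_{V_{t-1}^{-1}} > 1/\sqrt{k}$ I bound the per-round regret crudely by $2Bk$, and by \expref{exp:bound_num_of_large_x} there are only $\tilde{O}(dk)$ such rounds, contributing $\tilde{O}(Bk\cdot dk)$; in every other round all arms are small, so their total contribution is again at most $2\alpha_T \sum_{t \in [T]}\sum_{i \in I_t}\min(1/\sqrt{k}, \|x_t(i)\|_{V_{t-1}^{-1}}) = \tilde{O}(Rd\sqrt{kT})$; this gives $R(T) = \tilde{O}(Rd\sqrt{kT} + Bdk^2)$. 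Taking the smaller of the two bounds yields $R(T) = \tilde{O}(Rd\sqrt{kT} + \min(LS, Bk)dk)$.

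I do not expect a genuine analytic obstacle here: \lemref{lem:bound_x_and_large_x} does all the heavy lifting, and the rest is the decomposition above. The only points requiring care are the bookkeeping ones --- verifying that the deterministic \lemref{lem:bound_x_and_large_x} applies to the algorithm's chosen arms, and checking that the per-arm overestimate is $\tilde{O}(LS)$, which crucially uses the prescribed $\lambda = R^2S^{-2}d$ to make $\alpha_T L/\sqrt{\lambda} = \tilde{O}(LS)$ --- together with the observation that one must run the argument twice (per-arm splitting for the $LS$ factor, per-round splitting for the $Bk$ factor) to obtain the minimum.
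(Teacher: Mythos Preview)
Your proposal is correct and follows essentially the same approach as the paper: condition on the confidence event, use optimism plus the crude $2Bk$ per-round bound, split the chosen arms via the threshold $\|x_t(i)\|_{V_{t-1}^{-1}} \lessgtr 1/\sqrt{k}$, apply \expref{exp:bound_x_main} to the small arms, and bound the large-arm contribution in two ways (via $\|x_t(i)\|_{V_{t-1}^{-1}} \le L/\sqrt{\lambda}$ for the $LS$ factor and via the $2Bk$ cap together with the count bound \expref{exp:bound_num_of_large_x} for the $Bk$ factor). The only cosmetic difference is that the paper performs a single decomposition and then bounds the large-arm residual $\sum_t \min\bigl(\sum_{i \in J'_t} 2\beta_t(\delta)\|x_t(i)\|_{V_{t-1}^{-1}},\, 2Bk\bigr)$ in two ways, whereas you run the whole argument twice (per-arm versus per-round splitting); both yield exactly the same bounds.
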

\begin{proof}[Proof sketch]
  Let $J_t = \{ i \in [N] \mid \|x_t(i)\|_{V_{t-1}^{-1}} > 1/\sqrt{k} \}$ and $J'_t = I_t \cap J_t$.
  We separate chosen arms into two groups\footnote{
    To show the regret bound of the LinUCB algorithm \citep{chu11,li10},
    i.e., the C${}^2$UCB algorithm for the case $k = 1$,
    \citet{lattimore20} take a similar approach in the note of exercise 19.3.
  }: $\{J'_t\}_{t \in [T]}$ and the remaining arms.
  For $\{J'_t\}_{t \in [T]}$,
  replacing \lemref{lem:bound_x_old} with \lemref{lem:bound_x_and_large_x} in the proof of \thmref{thm:c2ucb_old} gives the first term of the regret bound.
  There are two ways to bound the regret caused by the other group.
  In one way, we use the same proof as the former group, which obtains $\tilde{O}(LSdk)$.
  In the other way,
  by \lemref{lem:bound_x_and_large_x},
  we bound the number of rounds in which the arms of this group are chosen.
  Then,
  we have an upper bound of the regret in a round that is $2Bk$.
  Thus, we obtain $\tilde{O}(Bdk^2)$ in this way.
  The second term of the regret bound can be obtained by combining these two ways.
\end{proof}

Next, we show that \thmref{thm:c2ucb} is better than \thmref{thm:c2ucb_old}.
We first consider the case $\lambda \ge L^2k$.
From the definition of $\lambda$, we have $LSk\sqrt{dT} \le Rd\sqrt{kT}$.
Since \thmref{thm:c2ucb_old} implies $\tilde{O}(Rd\sqrt{kT})$ regret,
it suffices to compare $LSk\sqrt{dT}$ with $\min(LS, Bk)dk$.
If $T < d$, the C${}^2$UCB algorithm has an obvious regret upper bound $2BkT$,
which satisfies $\tilde{O}(LSk\sqrt{dT})$ and $\tilde{O}(\min(LS, Bk)dk)$;
otherwise, we have $LSdk \le LSk\sqrt{dT}$.
In the other case,
\thmref{thm:c2ucb_old} implies $\tilde{O}(LSk\sqrt{dT})$ regret and
we have $Rd\sqrt{kT} \le LSk\sqrt{dT}$.
Thus, it also suffices to compare $LSk\sqrt{dT}$ with $\min(LS, Bk)dk$.
By the discussion in the first case, we obtain the desired result.

\subsection{Improved Regret Bound for the CCS Problem with Partition Matroid Constraints}

In this subsection, we show that the C${}^2$UCB algorithm admits an improved regret upper bound for the CCS problem with the partition matroid constraint, that matches the regret lower bound shown in \tabref{tab:summary}.

Now we define the partition matroid constraint.
Let $\{B_t(j)\}_{j \in [M]}$ be a partition of $[N]$ into $M$ subsets.
Let $\{d_t(j)\}_{j \in [M]}$ be a set of $M$ natural numbers.
Then the partition matroid constraint $S_t$ is defined from $\{B_t(j)\}_{j \in [M]}$ and $\{d_t(j)\}_{j \in [M]}$ as
\begin{align}\label{exp:partition_matroid}
  S_t = \left\{
    I \subseteq [N] \mid
    |I \cap B_t(j)| = d_t(j),
    \forall j \in [M]
  \right\}.
\end{align}
Such $S_t$ is known as the set of the bases of a partition matroid.
It is also known that linear optimization problems on a partition matroid constraint can be solved by the greedy algorithm.
The class of $S_t$ is so large that many fundamental classes are included.
Indeed,
the CCS problem with these constraints leads to the CCS problem with the uniform matroid constraints (i.e., the cardinality constraint) when $M = 1$ and $d_t(1) = k$ for all $t \in [T]$,
and the LB problem with periodic updates when $M = k$ and $d_t(j) = 1$ for all $j \in [M]$ and $t \in [T]$.

We show that the C${}^2$UCB algorithm achieves the optimal regret bound for the CCS problem with constraints satisfying \expref{exp:partition_matroid}:
\begin{theorem}\label{thm:c2ucb_sp}
  Assume that $S_t$ is defined by \expref{exp:partition_matroid} for all $t \in [T]$.
  Then, if $\alpha_t = \beta_t(\delta)$ and $\lambda = R^2S^{-2}d$,
  the C${}^2$UCB algorithm has the following regret bound with probability $1 - \delta$:
  \begin{align*}
    R(T) = \tilde{O}\left(
      Rd\sqrt{kT} + Bdk
    \right).
  \end{align*}
\end{theorem}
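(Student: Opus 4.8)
The plan is to sharpen the $\min(LS,Bk)\,dk$ term of \thmref{thm:c2ucb} down to $Bdk$ by exploiting the structure of \eqref{exp:partition_matroid}: a linear objective over a partition matroid is maximized by a greedy rule acting independently on each block $B_t(j)$ (it keeps the $d_t(j)$ arms of $B_t(j)$ with the largest $\hat{r}_t$). Consequently an arm with a large confidence interval can only displace arms of $I^*_t$ \emph{within its own block}, so it causes at most one ``bad swap'' rather than corrupting the whole selection. As in \thmref{thm:c2ucb_old} and \thmref{thm:c2ucb}, I would condition on the high-probability event that $|(\hat{\theta}_t-\theta^*)^\top x_t(i)|\le\beta_t(\delta)\|x_t(i)\|_{V_{t-1}^{-1}}$ for all $t,i$, so that $\theta^{*\top}x_t(i)\le\hat{r}_t(i)\le\theta^{*\top}x_t(i)+2\beta_t(\delta)\|x_t(i)\|_{V_{t-1}^{-1}}$.

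On that event, the partition structure lets the regret split over blocks:
\[
R(T)=\sum_{t\in[T]}\sum_{j\in[M]}\Big(\sum_{i\in I^*_t\cap B_t(j)}\theta^{*\top}x_t(i)-\sum_{i\in I_t\cap B_t(j)}\theta^{*\top}x_t(i)\Big).
\]
Fix $t$ and $j$ and set $A:=(I_t\cap B_t(j))\setminus I^*_t$ and $A^*:=(I^*_t\cap B_t(j))\setminus I_t$. Since $|I_t\cap B_t(j)|=|I^*_t\cap B_t(j)|=d_t(j)$ we have $|A|=|A^*|$, and the block-$j$ regret equals $\sum_{i'\in A^*}\theta^{*\top}x_t(i')-\sum_{i\in A}\theta^{*\top}x_t(i)$. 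Because the greedy rule picked the $d_t(j)$ arms of $B_t(j)$ with the largest $\hat{r}_t$, every chosen $i\in A$ satisfies $\hat{r}_t(i)\ge\hat{r}_t(i')$ for every unchosen $i'\in A^*$. Fixing any bijection $\pi:A^*\to A$, optimism for $i'$ combined with $\hat{r}_t(i')\le\hat{r}_t(\pi(i'))$ and the width bound for $\pi(i')$ gives $\theta^{*\top}x_t(i')-\theta^{*\top}x_t(\pi(i'))\le 2\beta_t(\delta)\|x_t(\pi(i'))\|_{V_{t-1}^{-1}}$, while the same difference is trivially at most $2B$ by $|\theta^{*\top}x_t(\cdot)|\le B$. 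The decisive step is to use whichever of the two estimates is better according to $\pi(i')$: writing $J_t:=\{i\in[N]:\|x_t(i)\|_{V_{t-1}^{-1}}>1/\sqrt{k}\}$, use $2\beta_t(\delta)\min(1/\sqrt{k},\|x_t(\pi(i'))\|_{V_{t-1}^{-1}})$ when $\pi(i')\notin J_t$, and the trivial bound $2B$ when $\pi(i')\in J_t$.

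Summing over $i'$ (equivalently, over $i=\pi(i')\in A$), then over the blocks $j$ — the sets $A$ are pairwise disjoint and contained in $I_t$ — and then over $t$, and using $\beta_t(\delta)\le\beta_T(\delta)$, we obtain
\[
R(T)\le 2\beta_T(\delta)\sum_{t\in[T]}\sum_{i\in I_t}\min\!\Big(\tfrac{1}{\sqrt{k}},\|x_t(i)\|_{V_{t-1}^{-1}}\Big)+2B\sum_{t\in[T]}\sum_{i\in I_t}\mathds{1}\!\big(\|x_t(i)\|_{V_{t-1}^{-1}}>1/\sqrt{k}\big).
\]
Now \eqref{exp:bound_x_main} bounds the first double sum by $\tilde{O}(\sqrt{dkT})$ and \eqref{exp:bound_num_of_large_x} bounds the second by $\tilde{O}(dk)$; since $\lambda=R^2S^{-2}d$ makes $\beta_T(\delta)=\tilde{O}(R\sqrt{d})$, this yields $R(T)=\tilde{O}(Rd\sqrt{kT}+Bdk)$, as claimed.

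The main obstacle is exactly the per-pair estimate. In \thmref{thm:c2ucb} a chosen arm of $J_t$ could only be charged either its width $2\beta_t(\delta)\|x_t(i)\|_{V_{t-1}^{-1}}=\tilde{O}(LS)$ or a whole round's worth $2Bk$ of regret, which is what produces the $\min(LS,Bk)\,dk$ term; the partition-matroid structure is precisely what makes it legitimate to pay only $2B$ per such arm. Turning this into a rigorous argument requires (i) verifying that under \eqref{exp:partition_matroid} the maximizer in line~8 of \algoref{alg:c2ucb} can be taken block-greedy, so that $\hat{r}_t(i)\ge\hat{r}_t(i')$ for all $i\in A$ and $i'\in A^*$ — this is where the combinatorial structure (as opposed to an arbitrary $S_t$) enters, and the point I expect to need the most care — and (ii) routine bookkeeping to check that the bijections pair distinct chosen arms, justifying the disjointness used when collapsing the block sums into $\sum_{i\in I_t}$.
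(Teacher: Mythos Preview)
Your proposal is correct and follows essentially the same line as the paper's proof: both exploit the block-greedy structure of \eqref{exp:partition_matroid} to show that each chosen arm in $J_t$ displaces at most one arm of $I^*_t$ and can therefore be charged $2B$ rather than $2Bk$, with the remaining arms handled via \eqref{exp:bound_x_main} and \eqref{exp:bound_num_of_large_x}. The only cosmetic difference is packaging: where you pair $A^*$ with $A$ via a per-block bijection $\pi$, the paper globally defines $J'_t=I_t\cap J_t$, shows $I_t\setminus J'_t=\argmax_{I\in S'_t}\sum_i\hat r_t(i)$ for a reduced partition-matroid family $S'_t$, and then constructs $J^*_t\subseteq I^*_t$ with $|J^*_t|=|J'_t|$ and $I^*_t\setminus J^*_t\in S'_t$ --- which is precisely your bijection in aggregate form.
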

\begin{proof}[Proof sketch]
  Recall that $I_t$ is the set of arms chosen by the C${}^2$UCB algorithm in the $t$-th round.
  Let $J_t = \{ i \in [N] \mid \|x_t(i)\|_{V_{t-1}^{-1}}^2 > 1/k \}$ and $J'_t = I_t \cap J_t$.
  As in the proof of \thmref{thm:c2ucb},
  we separate chosen arms into two groups: $I_t \setminus J'_t$ and $J'_t$.
  From the definition of $I_t$ and $J'_t$,
  we obtain $I_t \setminus J'_t = \argmax_{I \in S'_t} \sum_{i \in I} \hat{r}_t(i)$ for all $t \in [T]$,
  where
  \begin{align*}
    S'_t = \left\{
      I \subseteq [N] \setminus J'_t \mid
      \forall j \in [M],
      |I \cap B_t(j)| = \right. \\
      \left. d_t(j) - |B_t(j) \cap J'_t|
    \right\}.
  \end{align*}

  Let $J^*_t$ be a subset of $I^*_t$ that consists of the arms in $I^*_t \cap J'_t$,
  and $|B_t(j)\cap J'_t| - |I^*_t \cap J'_t \cap B_t(j)|$ arms chosen arbitrarily from $I^*_t \cap B_t(j)$
  for each $j \in [M]$.
  Then, $I^*_t \setminus J^*_t \in S'_t$
  and $|J^*_t| = |J'_t|$ for all $t \in [T]$.
  Similar to $I_t$,
  we divide $I^*_t$ into $I^*_t \setminus J^*_t$ and $J^*_t$.
  This gives
  \begin{align*}
    R(T) &=
    \sum_{t \in [T]} \left( \sum_{i \in I_t^* \setminus J_t^*} {\theta^*}^\top x_t(i) - \sum_{i \in I_t \setminus J'_t} {\theta^*}^\top x_t(i) \right) \\
    &+ \sum_{t \in [T]} \left( \sum_{i \in J_t^*} {\theta^*}^\top x_t(i) - \sum_{i \in J'_t} {\theta^*}^\top x_t(i) \right).
  \end{align*}
  The former term in the right-hand side of this equation is $\tilde{O}(Rd\sqrt{kT})$
  by the optimality of $I_t \setminus J'_t$ and the discussion in the proof of \thmref{thm:c2ucb}.
  The latter term is $\tilde{O}(Bdk)$ by the definition of $B$ and \lemref{lem:bound_x_and_large_x}.
\end{proof}

Note that for the LB problem with periodic updates,
the C${}^2$UCB algorithm reduces to the LinUCB algorithm \citep{chu11,li10} with periodic updates, and has the optimal regret bound.
Note also that we can show a similar result for related problems if we have a UCB-type algorithm and an upper bound of the number of chosen arms that have large confidence bounds.

\section{Proposed Algorithm}
In this section,
we propose an algorithm for a more general problem than the CCS problem.
We will show the optimal regret bound of the proposed algorithm for the general problem.

First, let us define the general CCS problem.
Let $X_t = \{x_t(i)\}_{i \in [N]}$ and $r_t^* = \{{\theta^*}^\top x_t(i)\}_{i \in [N]}$ for all $t \in [T]$.
In this problem,
the learner aims to maximize the sum of values $\sum_{t \in [T]} f_{r_t^*, X_t}(I_t)$ instead of the sum of rewards,
where $f_{r_t^*, X_t}(I_t)$ measures the quality of the chosen arms.
As in \citet{qin14},
we assume that the learner has access to an $\alpha$-approximation oracle $\mathcal{O}_S(r, X)$,
which provides $I \in S$ such that $f_{r, X}(I) \ge \alpha \max_{I' \in S} f_{r, X}(I')$ for some $\alpha \in (0, 1]$.
Thus, we evaluate the performance of an algorithm by the $\alpha$-regret $R^\alpha(T)$,
which is defined as
\begin{align*}
  R^\alpha(T) = 
  \sum_{t = 1}^{T} \left(
    \alpha f_{r_t^*, X_t}(I^*_t) - f_{r_t^*, X_t}(I_t)
  \right),
\end{align*}
where $I^*_t = \sum_{i \in I} f_{r_t^*, X_t}(I)$.
Note that the regret of the CCS problem is recovered if $\alpha = 1$ and $f_{r, X}(I)$ is the sum of rewards.
We make the following assumptions that are almost identical to those in \citet{qin14}.

\begin{assumption}\label{asm:monotonisity}
  For all $t \in [T]$ and $I \in S_t$,
  if a pair of rewards $r$ and $r'$ satisfies $r(i) \le r'(i)$ for all $i \in [N]$,
  we have $f_{r, X_t}(I) \le f_{r', X_t}(I)$.
\end{assumption}

\begin{assumption}\label{asm:Lipschitz}
  There exists a constant $C > 0$ such that
  for all $t \in [T]$, all $I \in S_t$, and
  any pair of rewards $r$ and $r'$,
  we have $|f_{r, X}(I) - f_{r', X}(I)| \le C \sum_{i \in I} |r(i) - r'(i)|$.
\end{assumption}

The class of functions that satisfies the assumptions includes practically useful functions.
For example,
the sum of rewards with the entropy regularizer \citep{qin13,qin14},
which has been applied to recommender systems in order to take into account both the sum of rewards and the diversity of the chosen arms,
satisfies the assumptions with $C = 1$.

The proposed algorithm is described in \algoref{alg:proposed}.
When $f_{r, X}(I)$ is the sum of rewards,
the difference between the C${}^2$UCB and the proposed algorithms is the definition of $\hat{r}_t(i)$.
We show the effectiveness of this difference.
In the analysis of the C${}^2$UCB algorithm,
the regret can be decomposed as
\begin{align*}
  R(T) &= \sum_{t \in [T]} \left( \sum_{i \in I_t^*} {\theta^*}^\top x_t(i) - \sum_{i \in I_t} \hat{r}_t(i) \right) \\
  &+ \sum_{t \in [T]} \sum_{i \in I_t} \left( \hat{r}_t(i) - {\theta^*}^\top x_t(i) \right),
\end{align*}
and the first term can be bounded by 0 since $I_t$ is an optimal solution to the problem $\max_{I \in S_t} \sum_{i \in I} \hat{r}_t(i)$.
Then, the right-hand side is bounded by 
\begin{align*}
  R(T) &\le \sum_{t \in [T]} \sum_{i \in I_t \setminus J'_t} (\hat{r}_t(i) - {\theta^*}^\top x_t(i)) \\
  &+ \sum_{t \in [T]} \sum_{i \in J'_t} (\hat{r}_t(i) - {\theta^*}^\top x_t(i)),
\end{align*}
where we recall that $J'_t \subseteq I_t$ is the set of arms such that $\|x_t(i)\|_{V_{t-1}^{-1}} > 1 / \sqrt{k}$.
In the proof of \thmref{thm:c2ucb}, 
the first term of the right-hand side is shown to be $\tilde{O}(Rd\sqrt{kT})$, which is optimal, while the second term can be $\tilde{O}(\max(LS, Bk)dk)$.
The reason the second term is so large is that each arm $i \in J'_t$ may have an overly optimistic reward estimate (i.e., $\hat{r}_t(i)$ may be large).
To overcome this issue, we reduce $\hat{r}_t(i)$ when arm $i$ has an overly optimistic reward estimate, keeping that the reduced value is an optimistic estimate required by UCB-type algorithms.
As described in \algoref{alg:proposed},
we adopt the maximum value of the average reward $B$ as $\hat{r}_t(i)$ when $i \in J_t$.

Similar to the above,
we can show that the proposed algorithm (\algoref{alg:proposed}) has the following regret bound:
\begin{theorem}\label{thm:regret_bound}
  If $\alpha_t = \beta_t(\delta)$ and $\lambda = R^2S^{-2}d$,
  the proposed algorithm has the following regret bound with probability $1 - \delta$:
  \begin{align*}
    R^\alpha(T) = \tilde{O}\left( C \left(
      Rd\sqrt{kT} + Bdk
    \right) \right).
  \end{align*}
\end{theorem}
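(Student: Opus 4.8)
The plan is to replay the regret decomposition behind \thmref{thm:c2ucb}, with \asmref{asm:monotonisity} and \asmref{asm:Lipschitz} substituting for linearity of the payoff, and with the clipping $\hat{r}_t(i) = B$ on $J_t$ doing the work that previously produced the $\min(LS, Bk)$ factor.

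First I would condition on the high-probability event $\mathcal{E}$ on which $|(\theta^* - \hat{\theta}_t)^\top x_t(i)| \le \beta_t(\delta)\,\|x_t(i)\|_{V_{t-1}^{-1}}$ for all $t \in [T]$ and $i \in [N]$; since $\alpha_t = \beta_t(\delta)$, the standard self-normalized concentration argument gives $\Pr[\mathcal{E}] \ge 1 - \delta$. Working on $\mathcal{E}$, the key preliminary observation is that $\hat{r}_t(i) \ge r_t^*(i) := {\theta^*}^\top x_t(i)$ for \emph{every} arm $i \in [N]$: for $i \notin J_t$ this is the confidence bound, and for $i \in J_t$ it is $\hat{r}_t(i) = B \ge {\theta^*}^\top x_t(i)$ by the definition of $B$. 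This componentwise optimism together with \asmref{asm:monotonisity} yields $f_{r_t^*, X_t}(I_t^*) \le f_{\hat{r}_t, X_t}(I_t^*)$, and the $\alpha$-approximation guarantee of the oracle gives $f_{\hat{r}_t, X_t}(I_t) \ge \alpha\, f_{\hat{r}_t, X_t}(I_t^*)$. Combining these and then invoking \asmref{asm:Lipschitz} (the absolute values disappear by optimism) bounds the per-round $\alpha$-regret by
\begin{align*}
  \alpha f_{r_t^*, X_t}(I_t^*) - f_{r_t^*, X_t}(I_t) \le f_{\hat{r}_t, X_t}(I_t) - f_{r_t^*, X_t}(I_t) \le C \sum_{i \in I_t} ( \hat{r}_t(i) - r_t^*(i) ).
\end{align*}

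Next I would split $I_t$ into $I_t \setminus J'_t$ and $J'_t := I_t \cap J_t$ and bound the two corresponding sums over $t$. For $i \in I_t \setminus J'_t$ we have $\|x_t(i)\|_{V_{t-1}^{-1}} \le 1/\sqrt{k}$ and, on $\mathcal{E}$, $\hat{r}_t(i) - r_t^*(i) \le 2\alpha_t\|x_t(i)\|_{V_{t-1}^{-1}} = 2\alpha_t \min(1/\sqrt{k}, \|x_t(i)\|_{V_{t-1}^{-1}})$; summing, bounding $\alpha_t \le \alpha_T = \tilde{O}(R\sqrt{d})$ via $\lambda = R^2S^{-2}d$, and applying \expref{exp:bound_x_main} of \lemref{lem:bound_x_and_large_x} to the chosen sets $\{I_t\}_{t \in [T]}$ (each of size $k$) gives $\tilde{O}(Rd\sqrt{kT})$. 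For $i \in J'_t$ we have $\hat{r}_t(i) = B$ and $r_t^*(i) \ge -B$, so $\hat{r}_t(i) - r_t^*(i) \le 2B$; summing and applying \expref{exp:bound_num_of_large_x} of \lemref{lem:bound_x_and_large_x} to bound $\sum_{t \in [T]} |J'_t| = \tilde{O}(dk)$ gives $\tilde{O}(Bdk)$. Multiplying by $C$ and adding the two bounds yields $R^\alpha(T) = \tilde{O}(C(Rd\sqrt{kT} + Bdk))$ on $\mathcal{E}$, which is the claim.

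I expect the only genuinely delicate step to be making the two requirements on the $J_t$-arms compatible: the monotonicity step needs $\hat{r}_t(i)$ to remain an over-estimate of $r_t^*(i)$, while avoiding the $\max(LS,Bk)$ loss of \thmref{thm:c2ucb} needs $\hat{r}_t(i) - r_t^*(i)$ to be bounded by an absolute constant (here $2B$) rather than by $2\alpha_t\|x_t(i)\|_{V_{t-1}^{-1}}$, which can be as large as $\tilde{O}(LS)$. Setting $\hat{r}_t(i) = B$ on $J_t$ — rather than the usual UCB value — is exactly what reconciles the two, and this is the one place where the construction of \algoref{alg:proposed} is essential; everything else reduces to the counting/norm-sum estimates already established in \lemref{lem:bound_x_and_large_x} and the argument behind \thmref{thm:c2ucb}.
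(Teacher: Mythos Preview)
Your proposal is correct and follows essentially the same route as the paper's proof: optimism of $\hat{r}_t$ on all of $[N]$ (via the confidence bound off $J_t$ and the clip $\hat{r}_t(i)=B$ on $J_t$), the chain \asmref{asm:monotonisity} $\to$ oracle $\to$ \asmref{asm:Lipschitz} to reduce to $C\sum_t\sum_{i\in I_t}(\hat{r}_t(i)-r_t^*(i))$, and then the split $I_t=(I_t\setminus J'_t)\cup J'_t$ handled by the two parts of \lemref{lem:bound_x_and_large_x}. The only cosmetic difference is that the paper splits the $I_t\setminus J'_t$ contribution into $R^{alg}(T)+R^{est}(T)$ and bounds each by $\beta_T(\delta)\sum_t\sum_{i\in I_t\setminus J'_t}\|x_t(i)\|_{V_{t-1}^{-1}}$ separately, whereas you combine them into the single inequality $\hat{r}_t(i)-r_t^*(i)\le 2\alpha_t\|x_t(i)\|_{V_{t-1}^{-1}}$.
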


We show that this regret bound is optimal.
We can define an instance of the general problem with any $C > 0$ from any instance of the CCS problem.
Indeed, for any $C > 0$, we can define $f_{r, X}(I) = C \sum_{i \in I} r(i)$.
Thus, the optimal degree of dependence on $C$ in the regret is linear.
For other parameters, we will show the lower bound in the next section.

\begin{algorithm}[tb]
  \caption{Proposed algorithm}
  \label{alg:proposed}
  \begin{algorithmic}[1]
    \Require $\lambda > 0$ and $\{\alpha_t\}_{t \in [T]}$ s.t. $\alpha_t > 0$ for all $t \in [T]$.
    \State $V_0 \gets \lambda I$ and $b_0 \gets \bm{0}$.
    \For{$t = 1, 2, \dots, T$}
      \State Observe $X_t = \{ x_t(i) \}_{i \in [N]}$ and $S_t$, and let $J_t = \{ i \in [N] \mid x_t(i)^\top V_{t-1}^{-1} x_t(i) > 1/k \}$.
      \State $\hat{\theta}_t \gets V_{t-1}^{-1}b_{t-1}$.
      \For{$i \in [N]$}
        \State If $i \in J_t$ then $\hat{r}_t(i) \gets B$; otherwise $\hat{r}_t(i) \gets \hat{\theta}_t^\top x_t(i) + \alpha_t\sqrt{x_t(i)^\top V_{t-1}^{-1}x_t(i)}$.
      \EndFor
      \State Play a set of arms $I_t = \mathcal{O}_{S_t}(\{\hat{r}_t(i)\}_{i \in [N]}, X_t)$ and observe rewards $\{r_t(i)\}_{i \in I_t}$.
      \State $V_t \gets V_{t-1} + \sum_{i \in I_t} x_t(i)x_t(i)^\top$ and $b_t \gets b_{t-1} + \sum_{i \in I_t} r_t(i)x_t(i)$.
      \EndFor
  \end{algorithmic}
\end{algorithm}

\section{Lower Bounds}

In this section,
we show the regret lower bound that matches the regret upper bound shown in Theorems \ref{thm:c2ucb_sp} and \ref{thm:regret_bound} up to logarithmic factors.
To achieve the lower bound, we mix two types of instances,
which provide $\Omega(Rd\sqrt{kT})$ and $\Omega(Bdk)$ regret, respectively.
While the first type of instance represents the difficulty of learning due to the noise added to the rewards,
the second represents the minimum sample size required to learn the $d$-dimensional vector $\theta^*$ in the CCS problem.

We first consider instances that achieve $\Omega(Rd\sqrt{kT})$ and are analogous to the instances for the LB problem.
Since the lower bound of the LB problem is known to be $\Omega(d\sqrt{T})$ with $R = 1$,
the CCS problem in which the number of arms to select is $kT$ would yield $\Omega(Rd\sqrt{kT})$.
In these instances,
the learner chooses $k$ vertices from a $d$-dimensional hyper cube.
Note that the duplication of vertices is allowed.
\begin{theorem}\label{thm:lb_noise}
  Let $\{x_t(i)\}_{i = (s-1)2^d+ 1}^{s 2^d} = \{-1, 1\}^d$ and $S_t = \{ I \subseteq [k2^d] \mid |I| = k \}$
  for any $s \in [k]$ and $t \in [T]$.
  Let $\Theta = \{-R/\sqrt{kT}, R/\sqrt{kT}\}^d$.
  Assume that $\eta_t(i) \sim \mathcal{N}(0, R^2)$ independently.
  Then, for any algorithm, there exists $\theta^* \in \Theta$ such that $R(T) = \Omega(Rd\sqrt{kT})$.
\end{theorem}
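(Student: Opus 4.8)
The plan is to reduce to the classical hypercube lower bound for linear bandits and run a coordinate-wise (Assouad-type) argument, the only twist being that the effective horizon is the number of arm pulls $kT$ rather than $T$. Throughout I would write $\Delta = R/\sqrt{kT}$, so $\Theta = \{-\Delta,\Delta\}^d$, and fix $\theta^* \in \Theta$. Since duplicates are allowed and each of the $k$ blocks contains all of $\{-1,1\}^d$, the optimal set $I_t^*$ may be taken to be $k$ copies of $x^* := (\operatorname{sign}(\theta_1^*),\dots,\operatorname{sign}(\theta_d^*))$, so $\sum_{i\in I_t^*}{\theta^*}^\top x_t(i) = kd\Delta$. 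For any arm vector $x\in\{-1,1\}^d$ one has ${\theta^*}^\top(x^* - x) = \sum_{j:\,x_j\neq x_j^*} 2\Delta$; hence, writing $N_j := \sum_{t\in[T]}\sum_{i\in I_t}\mathds{1}(\operatorname{sign}(x_t(i)_j)\neq\operatorname{sign}(\theta_j^*))$ for the number of (round, chosen-arm) pairs that get coordinate $j$ wrong, the regret is exactly separable: $R(T) = 2\Delta\sum_{j=1}^d N_j$. So it suffices to show that, when $\theta^*$ is drawn uniformly from $\Theta$, $\mathbb{E}[N_j] = \Omega(kT)$ for every $j$.

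I would then fix a coordinate $j$ and pair each $\theta\in\Theta$ with its coordinate-$j$ sign flip $\theta^{(j)}$. For a round $t$ and a chosen index $i\in I_t$, the event $A_{t,i} = \{\operatorname{sign}(x_t(i)_j)\neq\operatorname{sign}(\theta_j)\}$ is $\mathcal{F}_t$-measurable, since the feature vectors are deterministic in this instance and $I_t$ is a function of the history through round $t-1$. The laws $\mathbb{P}_\theta$ and $\mathbb{P}_{\theta^{(j)}}$ differ only through the reward means: each observed $r_s(i) = \theta^\top x_s(i) + \eta_s(i)$ with $\eta_s(i)\sim\mathcal{N}(0,R^2)$ has its conditional mean shifted by $2\theta_j x_s(i)_j$, of absolute value $2\Delta$. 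By the chain rule for KL divergence over the $k(t-1)\le kT$ reward observations preceding round $t$,
\[
  \mathrm{KL}\left(\mathbb{P}_\theta|_{\mathcal{F}_t}\,\|\,\mathbb{P}_{\theta^{(j)}}|_{\mathcal{F}_t}\right) \;\le\; kT\cdot\frac{(2\Delta)^2}{2R^2} \;=\; 2 .
\]
Applying the Bretagnolle--Huber inequality to $A_{t,i}$ gives $\mathbb{P}_\theta(A_{t,i}) + \mathbb{P}_{\theta^{(j)}}(A_{t,i}^c) \ge \tfrac12 e^{-2}$, and since $A_{t,i}^c$ is precisely the wrong-sign event on coordinate $j$ under $\theta^{(j)}$, summing over the $k$ indices in $I_t$ and over $t\in[T]$ yields $\mathbb{E}_\theta[N_j] + \mathbb{E}_{\theta^{(j)}}[N_j] \ge \tfrac12 e^{-2}\,kT$. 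Averaging over the $2^{d-1}$ pairs gives $\frac{1}{2^d}\sum_{\theta\in\Theta}\mathbb{E}_\theta[N_j] \ge \tfrac14 e^{-2}\,kT$; summing over $j$ and substituting into $R(T) = 2\Delta\sum_j N_j$ gives $\frac{1}{2^d}\sum_{\theta\in\Theta}\mathbb{E}_\theta[R(T)] \ge 2\Delta\cdot\tfrac14 e^{-2}\,dkT = \tfrac{e^{-2}}{2}\,dR\sqrt{kT}$. Hence some $\theta^*\in\Theta$ has $R(T) = \Omega(dR\sqrt{kT})$, and for a randomized algorithm one first conditions on its internal randomness.

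The main thing to get right is bookkeeping rather than any new idea: one must (i) be careful that all $k$ arms of a round are selected from the same information set $\mathcal{F}_t$, so the KL chain rule runs over rounds with $k$ Gaussian observations each while the events $A_{t,i}$ stay $\mathcal{F}_t$-measurable, and (ii) verify that the scaling $\Delta = R/\sqrt{kT}$ from the statement keeps the per-pair KL at an absolute constant, which is exactly what makes the Bretagnolle--Huber bound nonvacuous. One could sharpen the constant by instead taking $\Delta = cR/\sqrt{kT}$ and maximizing $c\,e^{-2c^2}$, but that affects only constants.
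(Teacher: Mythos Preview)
Your argument is correct and rests on the same mathematical idea as the paper's: the hypercube instance with a coordinate-wise Assouad/Bretagnolle--Huber argument, where the effective sample size is the number of arm pulls $kT$ rather than $T$. The only difference is packaging: the paper observes that any CCS algorithm on this instance is a particular linear-bandit algorithm over $kT$ rounds (one that ignores within-round feedback) and then simply invokes Theorem~24.1 of \citet{lattimore20}, whereas you unroll that theorem's proof in place. Your self-contained version is more explicit about the batch structure (all $k$ arms of round $t$ are $\mathcal{F}_t$-measurable, so the KL chain rule still caps at $kT$ Gaussian terms), but the content is the same.
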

\begin{proof}
  We first consider instances that achieve the lower bound of the LB problem.
  Using the discussion of Theorem 24.1 of \citet{lattimore20},
  we obtain the lower bound of $\Omega(Rd\sqrt{T})$ for a certain $\theta \in \Theta$ when $k=1$.
  Note that this lower bound holds even if the algorithm knows in advance the given set of arms of all rounds.

  Then, we observe that the set of algorithms for the above instances with $kT$ rounds includes any algorithm for the CCS problem, which proves the theorem.
\end{proof}

We next introduce the instances of $\Omega(dk)$, based on the fact that no feedback can be received until $k$ arms are selected in the CCS problem.
More specifically,
these instances consist of $\Theta(d)$ independent 2-armed bandit problems with delayed feedback.
In each problem, the learner suffers $\Omega(Bk)$ regret due to the delayed feedback.

\begin{theorem}\label{thm:lb_learn}
  Let $d = 2d'$ and $S_t = \{ I \subseteq [2k] \mid |I| = k \}$.
  Assume that $\eta_t(i) = 0$.
  Define $\min(d', T)$ groups by dividing rounds.
  For each group $j \in [\min(d', T)]$,
  the given arms are defined as
  $x_t(i) = B\sqrt{d} e_{2j-1}$ for $i \le k$ and
  $x_t(i) = B\sqrt{d} e_{2j}$ for $i > k$,
  where $\{e_l\}_{l \in [d]}$ is the normalized standard basis.
  Let $\Theta = \{-1/\sqrt{d}, 1/\sqrt{d}\}^d$.
  Then, for any algorithm, there exists $\theta^* \in \Theta$ such that $R(T) = \Omega(\min(Bdk, BkT))$.
\end{theorem}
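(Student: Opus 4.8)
The plan is a standard averaging (Bayesian) argument combined with the structural fact that in the CCS problem no feedback arrives until all $k$ arms of a round have been committed. I would draw $\theta^*$ uniformly at random from $\Theta$, lower-bound the expected value of $R(T)$ under this prior, and then extract a worst-case $\theta^* \in \Theta$. Throughout, note that the construction is consistent with the parameter conventions: since each $\theta^*_l \in \{-1/\sqrt{d},1/\sqrt{d}\}$ and $e_l$ is a unit vector, $\|\theta^*\|_2 = 1$ and $|{\theta^*}^\top x_t(i)| = B\sqrt{d}\cdot(1/\sqrt{d}) = B$, as required.

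First I would analyze a single group $j \in [\min(d',T)]$ in isolation. Write $\rho_1 := B\sqrt{d}\,\theta^*_{2j-1}$ and $\rho_2 := B\sqrt{d}\,\theta^*_{2j}$, so that $\rho_1,\rho_2 \in \{-B,B\}$. In any round $t$ of group $j$, a feasible set is completely described by the number $m \in \{0,1,\dots,k\}$ of arms it takes from $\{1,\dots,k\}$ (the remaining $k-m$ coming from $\{k+1,\dots,2k\}$), and its value under $\theta^*$ is $m\rho_1 + (k-m)\rho_2$. Hence the per-round optimum is $f_t^* := \sum_{i \in I_t^*}{\theta^*}^\top x_t(i) = k\max(\rho_1,\rho_2)$. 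Over the random draw of $\theta^*$, the pair $(\rho_1,\rho_2)$ is uniform on $\{-B,B\}^2$, so $\max(\rho_1,\rho_2) = B$ with probability $3/4$ and $=-B$ with probability $1/4$, giving $\mathbb{E}[f_t^*] = kB/2$.

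The crux of the argument is that, because $\eta_t(i) = 0$, nothing the learner has observed before the first round of group $j$ carries any information about $(\theta^*_{2j-1},\theta^*_{2j})$: the feature vectors are fixed, and every reward collected in groups $1,\dots,j-1$ lies in one of the directions $e_1,\dots,e_{2j-2}$, hence is a function of $\theta^*_1,\dots,\theta^*_{2j-2}$ only. Conditioning on the learner's internal randomness and on $\theta^*_1,\dots,\theta^*_{2j-2}$, the value $m$ chosen in the first round of group $j$ has a fixed conditional distribution while $(\theta^*_{2j-1},\theta^*_{2j})$ is still uniform on $\{-1/\sqrt{d},1/\sqrt{d}\}^2$ and independent of it; therefore $\mathbb{E}[m\rho_1 + (k-m)\rho_2] = \mathbb{E}[m]\mathbb{E}[\rho_1] + \mathbb{E}[k-m]\mathbb{E}[\rho_2] = 0$. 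Thus in the first round of every group the expected (over $\theta^*$) instantaneous regret is at least $kB/2$.

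Finally, since $f_t^* - \sum_{i \in I_t}{\theta^*}^\top x_t(i) \ge 0$ in every round, discarding all but the first round of each group still yields $\mathbb{E}_{\theta^*}[R(T)] \ge \frac{kB}{2}\min(d',T)$; substituting $d' = d/2$ and using $\min(d/2,T) \ge \tfrac12\min(d,T)$ gives $\mathbb{E}_{\theta^*}[R(T)] = \Omega(\min(Bdk,BkT))$, and an averaging argument produces a $\theta^* \in \Theta$ attaining this bound. I expect the only delicate point to be phrasing the independence claim cleanly: one must be sure that the learner's possible randomization, and the fact that it later sees the feature vectors of other groups (which are nonetheless fixed and $\theta^*$-free), cannot leak information about the two coordinates active in the current group — both issues are handled by the conditioning described above. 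The rest is routine bookkeeping, including the minor split between $T \ge d'$ and $T < d'$, which only governs how the rounds are partitioned into groups and leaves the per-group estimate untouched.
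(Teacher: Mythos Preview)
Your argument is correct and rests on the same key observation as the paper's: in the first round of every group the learner has no information about the two relevant coordinates of $\theta^*$, so that round alone contributes $\Omega(Bk)$ regret; summing over the $\min(d',T)$ groups and using per-round nonnegativity of the instantaneous regret gives the claim.

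The only difference is stylistic. The paper invokes the standard reduction to deterministic algorithms (citing Auer et al.) and then, for a fixed deterministic learner, uses pigeonhole (at least $k/2$ arms of one type are chosen in the first round) to adversarially set the two coordinates so that the majority type is sub-optimal, yielding regret at least $Bk$ in that round. You instead place the uniform prior on $\Theta$, compute $\mathbb{E}[f_t^*]=kB/2$ and $\mathbb{E}[\text{chosen value}]=0$ by independence, and extract a worst-case $\theta^*$ by averaging. These are the two dual formulations of Yao's principle; your version has the minor advantage that it handles randomized learners directly without citing the reduction, at the cost of the extra bookkeeping about the prior and the independence conditioning. Both arrive at the same constant up to a factor of two.
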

\begin{proof}
  As in Appendix A of \citet{auer02b},
  it is sufficient to consider only the deterministic algorithms.
  In the first round of each group,
  any algorithm selects $k/2$ or more from one of the two types of arms.
  Therefore, we can choose $\theta^* \in \Theta$ so that for each group, the majority type of chosen arms is not optimal,
  in which case the algorithm suffers $\Theta(Bk)$ regret.
\end{proof}

Finally,
by combining the two types of instance above,
we have instances achieving the matching regret lower bound:
\begin{theorem}\label{thm:lower_bound}
  Suppose that $kT = \Omega((Rd/B)^2)$ and $d = 2d'$.
  Then, for any given algorithm,
  if we use instances of \thmref{thm:lb_noise} and \thmref{thm:lb_learn} constructed using different $d'$ dimensions in the first and second halves of the round, respectively,
  that instance achieves the following:
  \begin{align*}
    R(T) = \Omega(\min(Rd\sqrt{kT}+Bdk, BkT)).
  \end{align*}
\end{theorem}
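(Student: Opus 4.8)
The plan is to realize the bound by a direct \emph{product} construction: concatenate the hard instance of \thmref{thm:lb_noise} (which stresses the learning difficulty due to noise) with that of \thmref{thm:lb_learn} (which stresses the delay before feedback) along orthogonal coordinate blocks and disjoint time windows, so that the two lower bounds add up. Concretely, I would split the $d = 2d'$ coordinates into two blocks of size $d'$ (discarding one coordinate if a parity issue arises, which costs nothing asymptotically) and split the $T$ rounds into a first half and a second half of $T/2$ rounds each. During the first $T/2$ rounds I place the \thmref{thm:lb_noise} instance on the first coordinate block (with all feature vectors having zero entries on the second block), using $\Theta_1 = \{-R'/\sqrt{kT/2}, R'/\sqrt{kT/2}\}^{d'}$ with Gaussian noise $\mathcal{N}(0,R^2)$, where $R' \le R$ is a small enough constant multiple of $R$; during the last $T/2$ rounds I place the (noiseless) \thmref{thm:lb_learn} instance on the second block (zeros on the first block). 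In each phase the arm set and $S_t$ are the "choose $k$ arms" sets of the corresponding sub-instance, padded with dummy zero arms so that the number of arms is the same in every round; the overall noise is $R$-sub-Gaussian since $0$-sub-Gaussian and $\mathcal{N}(0,R^2)$ both are.

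Next I would check that this is a legal CCS instance with the stated parameters, the only nonroutine point being the bound $B$. On the second phase $|{\theta^*}^\top x_t(i)| = B$ exactly, by construction of \thmref{thm:lb_learn}. On the first phase $|{\theta^*}^\top x_t(i)| \le d' \cdot R'/\sqrt{kT/2} = \Theta(R'd/\sqrt{kT})$, which is at most $B$ once $R'$ is a sufficiently small constant multiple of $R$ — and this is precisely where the hypothesis $kT = \Omega((Rd/B)^2)$ is used. The parameters $L$ and $S$ do not appear in the target bound and can be taken to be whatever the construction yields.

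The core step is a "parallel" application of the two lower bounds. Draw $\theta^* = (\theta^*_1,\theta^*_2)$ with $\theta^*_1$ uniform over the hard set $\Theta_1$ of the first sub-instance and $\theta^*_2$ uniform over the hard set $\Theta$ of \thmref{thm:lb_learn} on the second block, independently (both cited proofs in fact lower bound the regret averaged over these uniform hypercube priors). Since the first-phase feature vectors and rewards depend only on $\theta^*_1$ and vanish on the second block, the entire first-phase transcript is independent of $\theta^*_2$, and symmetrically the second-phase rewards depend only on $\theta^*_2$. Hence the algorithm's first-phase decisions constitute an algorithm for the \thmref{thm:lb_noise} sub-instance (horizon $T/2$, dimension $d'$), and, conditioned on \emph{any} fixed first-phase transcript, its second-phase decisions constitute an algorithm for the \thmref{thm:lb_learn} sub-instance (horizon $T/2$, dimension $d'$) — the carried-over internal state is irrelevant, as \thmref{thm:lb_learn} holds for any algorithm. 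Averaging over the two priors, the expected regret decomposes as the sum of the expected first- and second-phase regrets, which are $\Omega(R'd'\sqrt{k(T/2)}) = \Omega(Rd\sqrt{kT})$ and $\Omega(\min(Bd'k, Bk(T/2))) = \Omega(\min(Bdk, BkT))$ by the two theorems; therefore some fixed $\theta^*$ attains $R(T) = \Omega(Rd\sqrt{kT} + \min(Bdk, BkT))$.

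Finally I would pass to the claimed $\min$-form via the elementary inequality $A + \min(P, Z) \ge \min(A + P, Z)$ for nonnegative $A, P, Z$ (if $P \le Z$ the left side is $A + P \ge \min(A+P,Z)$; if $P > Z$ then $A + P > Z$, so $\min(A+P,Z) = Z \le A + Z$), applied with $A = Rd\sqrt{kT}$, $P = Bdk$, $Z = BkT$, yielding $R(T) = \Omega(\min(Rd\sqrt{kT} + Bdk, BkT))$. I expect the main obstacle to be making the decomposition in the third step fully rigorous — i.e., arguing cleanly that the first-phase history carries no information about $\theta^*_2$, so that the $\Omega(\min(Bdk, BkT))$ bound of \thmref{thm:lb_learn} genuinely applies to the second phase despite the algorithm's state being inherited across the two phases; the parameter bookkeeping and the min--sum inequality are routine.
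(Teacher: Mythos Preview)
Your proposal is correct and follows essentially the same approach as the paper: verify that the hypothesis $kT = \Omega((Rd/B)^2)$ ensures the first-phase rewards satisfy $|{\theta^*}^\top x_t(i)| \le B$, then invoke \thmref{thm:lb_noise} and \thmref{thm:lb_learn} on the two halves to obtain $\Omega(Rd\sqrt{kT} + \min(Bdk, BkT))$, and finally pass to the $\min$-form. The paper's own proof is extremely terse (three sentences) and does not spell out the decomposition argument or the $A + \min(P,Z) \ge \min(A+P,Z)$ step at all; your write-up is considerably more careful on both points, and your identification of the cross-phase independence as the only nontrivial issue is accurate.
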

\begin{proof}
  From $kT = \Omega((Rd/B)^2)$,
  we have $|{\theta^*}^\top x_t(i)| < B$ for all $i \in [N]$ and $t \in [T]$.
  Hence, we obtain $R(T) = O(BkT)$.
  Alternatively,
  from \thmref{thm:lb_noise} and \thmref{thm:lb_learn},
  we have $\Omega(Rd\sqrt{kT}+\min(Bdk, BkT))$.
\end{proof}

Note that
we can set $R > 0$ and $B > 0$ arbitrarily in the instances of \thmref{thm:lower_bound},
but $L$ and $S$ are automatically determined as $L = O(\max(1, B\sqrt{d}))$ and $S = O(1)$.

\section{Numerical Experiments}

\begin{table*}
  \centering
  \begin{tabular}{@{}ll@{}}
    \toprule
    Algorithm & Parameters \\ \midrule
    $\eps$-greedy & $\eps = 0.05$ and $\lambda = 1$ \\
    C${}^2$UCB (\algoref{alg:c2ucb}) \citep{qin14}
    & $\lambda = d$ and $\forall t, \alpha_t = \sqrt{d}$ \\
    Thompson sampling \citep{takemura19}
    & $\lambda = d$ and $\forall t, v_t = \sqrt{d}$ \\
    CombLinUCB \citep{wen15}
    & $\lambda = 1$, $\sigma = 1$, and $c = \sqrt{d}$ \\
    CombLinTS \cite{wen15}
    & $\lambda = 1$ and $\sigma = 1$ \\
    Proposed (\algoref{alg:proposed})
    & $\lambda = d$ and $\forall t, \alpha_t = \sqrt{d}$ \\
    \bottomrule
  \end{tabular}
  \caption{
    Algorithms in the numerical experiments.
  }
  \label{tab:algorithms}
\end{table*}

\begin{figure*}[tb]
  \centering
  \includegraphics[width=\hsize]{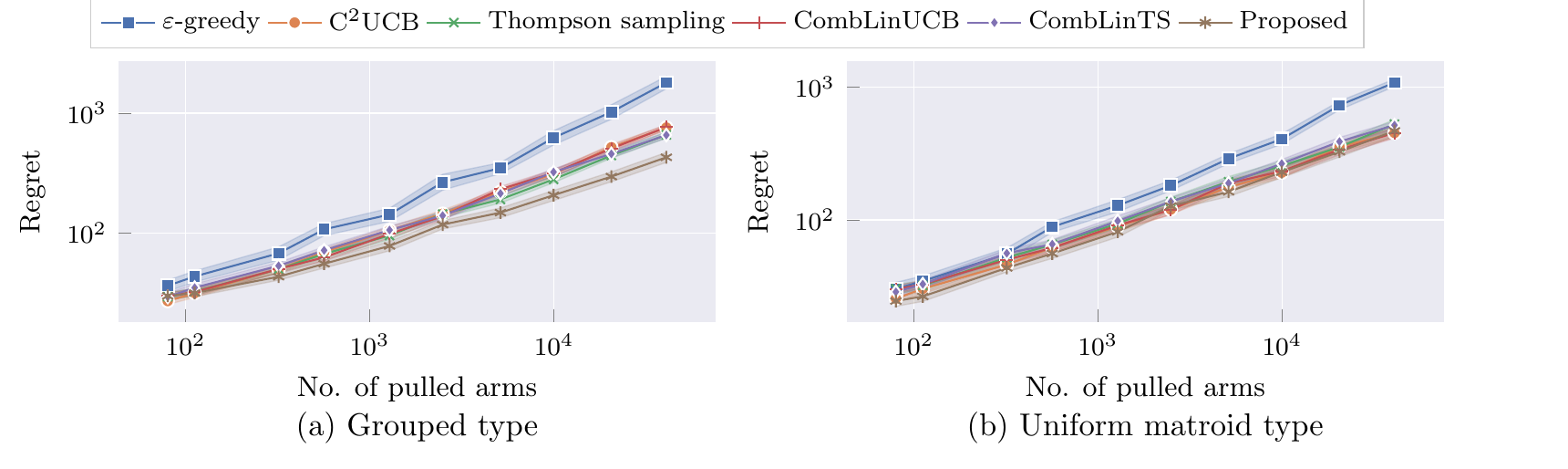}
  \caption{Experimental results.}
  \label{fig:results}
\end{figure*}

\subsection{Setup}
In this section,
we evaluate the performance of
the C${}^2$UCB
and the proposed algorithms through numerical experiments.
Two types of instance are prepared:
one in which the constraints are not represented by the partition matroid
and one in which they are.
We call these types \textit{grouped type} and \textit{uniform matroid type}, respectively.
Our analysis suggests that the C${}^2$UCB algorithm performs well on the uniform matroid type only
and that our proposed algorithm does well on both types.
The aim of our experiments is to verify this.

Let us explain the details of the instances.
The grouped type is given by combining the instances of \thmref{thm:lb_noise} with $d = 4$ and $R = 1$
and an instance defined as follows.
Suppose that $d = 3$, $N = 2k$, and $\theta^* = (0, 0.1, 0.9)^\top$.
Let $f(t)$ be $t - k \lfloor t/k \rfloor$.
The feature vectors are defined as
\begin{empheq}[left={x_t(i) = \empheqlbrace}]{alignat=2}
  & 2^{f(t)} e_{1} && \quad \mathrm{if}\ i = 1 \nonumber \\
  & e_{2} && \quad \mathrm{if}\ 1 < i \le k \nonumber \\
  & e_{3} && \quad \mathrm{if}\ i > k \nonumber
\end{empheq}
for all $t \in [T]$.
The random noise $\eta_t(i)$ follows $\mathcal{N}(0, 1)$ independently for all $t \in [T]$ and $i \in [N]$.
The feasible combinations are defined as
$S_t = \{\{1, 2, \ldots, k\}, \{k+1, k+2, \ldots, 2k\}\}$ for all $t \in [T]$.
Note that this is not represented by the partition matroid.
As for the uniform matroid type,
the feasible combinations are defined as
$S_t = \{ I \subseteq [N] \mid |I| = k \}$ for all $t \in [T]$.
This is one of the uniform matroid constraints, which forms a subclass of partition matroid constraints.
The other parameters are the same as the grouped type.

We start with $k = 2$ and $T = 40$,
and increase $k$ and $T$ so that they satisfy $k = \Theta(T)$.
We run 100 simulations to obtain the means of the regrets.
We evaluate the performance of an algorithm by the means of the regrets for the worst $\theta^*$:
We compare the means for all $\theta^*$ for the largest $kT$
and choose the $\theta^*$ with the largest mean.

We compare the proposed algorithm with five existing algorithms as baselines using the parameters described in \tabref{tab:algorithms}.
The $\eps$-greedy algorithm has two ways of estimating the rewards of given arms:
one is to use the values sampled from $\mathcal{N}(0, 1)$ independently,
and the other is to estimate the rewards as in line 6 of \algoref{alg:c2ucb} with $\alpha_t = 0$.
This algorithm chooses the former way with probability $\eps$ and the latter way otherwise.
Then, it plays a set of arms as in line 8 of \algoref{alg:c2ucb}.

\subsection{Results}
\figref{fig:results}(a) and (b) show the relation between the number of pulled arms (i.e., $kT$) and the regret for the grouped type and the uniform matroid type, respectively.
Error bars represent the standard error.

As we can see in \figref{fig:results}(a),
the regret of the proposed algorithm increased most slowly,
which indicates that
the regrets of the existing and proposed algorithms have different degrees of dependence on the number of pulled arms.
We can explain this phenomenon from the viewpoint of the overly optimistic estimates of rewards.
Since $\|x_t(1)\|_2$ increased exponentially until the $k$-th round,
the C${}^2$UCB algorithm often gave the arm an overly optimistic reward in these rounds.
It follows from this optimistic estimate that
the sum of optimistic rewards in the first group $\{1, 2, \ldots, k\}$ was often greater than that in the other group.
Hence, the C${}^2$UCB algorithm often chose the sub-optimal group and suffered $\Theta(Bk)$ regret in a round.
Note that this phenomenon is almost completely independent of the linearity of the linear payoff function,
which implies that the negative effect of the overly optimistic estimates could appear in UCB-type algorithms for related problems with semi-bandit feedback.

On the other hand, as shown in \figref{fig:results}(b),
the regrets of all the algorithms except the $\eps$-greedy algorithm were almost the same.
This is because
the constraints of the uniform matroid type satisfy the condition \expref{exp:partition_matroid},
and then the C${}^2$UCB algorithm has the optimal regret bound described in \thmref{thm:c2ucb_sp}.
More precisely, as opposed to the grouped type, the regret suffered from the overly optimistic estimates is at most $\Theta(B)$ in a round.

\section{Conclusion}
We have discussed the CCS problem and shown matching upper and lower bounds of the regret.
Our analysis has improved the existing regret bound of the C${}^2$UCB algorithm and clarified the negative effect of the overly optimistic estimates of rewards in bandit problems with semi-bandit feedback.
We have solved this issue in two ways:
introducing partition matroid constraints and providing other optimistic rewards to arms with large confidence intervals.
Our theoretical and numerical analyses have demonstrated the impact of the overly optimistic estimation and the effectiveness of our approaches.

As we discussed,
the negative effect of the overly optimistic estimation could appear in related problems as well.
Since the ideas of our approaches do not depend on the linearity of the linear payoff functions,
we believe they are applicable to overly optimistic estimation in related problems.

Although the proposed algorithm achieves the optimal regret bound,
it uses $B$ explicitly as opposed to the C${}^2$UCB algorithm.
It is an open question whether there exists some algorithm that achieves the optimal regret bound for general constraints without knowledge of the tight upper bound of $B$.

\section*{Acknowledgements}
SI was supported by JST, ACT-I, Grant Number JPMJPR18U5, Japan.
TF was supported by JST, PRESTO, Grant Number JPMJPR1759, Japan.
NK and KK were supported by JSPS, KAKENHI, Grant Number JP18H05291, Japan.

\bibliography{references}

\clearpage
\onecolumn
\appendix

\section{Proofs}
\subsection{Known Results}
Our proofs use the following known results:
\begin{lemma}[Theorem 2 of \citet{abbasi11}]\label{lem:confidence}
  Let $\{F_t\}_{t=0}^\infty$ be a filtration, $\{X_t\}_{t=1}^\infty$ be an $\R{d}$-valued stochastic process such that $X_t$ is $F_{t-1}$-measurable, and $\{\eta_t\}_{t=1}^\infty$ be a real-valued stochastic process such that $\eta_t$ is $F_t$-measurable.
  Let $V = \lambda I$ be a positive definite matrix,
  $V_t = V + \sum_{s \in [t]} X_sX_s^\top$,
  $Y_t = \sum_{s \in [t]} {\theta^*}^\top X_s + \eta_s$ and
  $\hat{\theta}_t = V_{t-1}^{-1}Y_t$.
  Assume for all $t$ that $\eta_t$ is conditionally $R$-sub-Gaussian for some $R > 0$ and
  $\|\theta^*\|_2 \le S$.
  Then, for any $\delta > 0$, with probability at least $1 - \delta$, for any $t \ge 1$,
  \begin{align*}
    \|\hat{\theta}_t - \theta^*\|_{V_{t-1}} \le R \sqrt{2\log\left( \frac{\det(V_{t-1})^{1/2}\det(\lambda I)^{-1/2}}{\delta} \right)} + \sqrt{\lambda} S.
  \end{align*}
  Furthermore, if $\|X_t\|_2 \le L$ for all $t \ge 1$, then with probability at least $1 - \delta$, for all $t \ge 1$,
  \[
    \|\hat{\theta}_t - \theta^*\|_{V_{t-1}} \le R \sqrt{d\log\left( \frac{1 + (t-1)L^2 / \lambda}{\delta} \right)} + \sqrt{\lambda} S.
  \]
\end{lemma}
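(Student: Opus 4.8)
The statement is the self-normalized confidence bound of \citet{abbasi11}, and the plan is to prove the determinant inequality by the \emph{method of mixtures} for self-normalized martingales and then derive the explicit-dimension inequality from a deterministic estimate of $\det(V_{t-1})$. First I would reduce to a self-normalized quantity: writing $S_t := \sum_{s \in [t]} \eta_s X_s$ and aligning the summation index with $V_{t-1}$, one has $\hat{\theta}_t - \theta^* = V_{t-1}^{-1} S_{t-1} - \lambda V_{t-1}^{-1} \theta^*$, so by the triangle inequality in the $V_{t-1}$-norm together with $\lambda V_{t-1}^{-1} \preceq I$,
\[
  \|\hat{\theta}_t - \theta^*\|_{V_{t-1}} \le \|S_{t-1}\|_{V_{t-1}^{-1}} + \lambda \big\| V_{t-1}^{-1}\theta^* \big\|_{V_{t-1}} \le \|S_{t-1}\|_{V_{t-1}^{-1}} + \sqrt{\lambda}\, \|\theta^*\|_2 .
\]
Hence it suffices to bound the self-normalized term $\|S_{t-1}\|_{V_{t-1}^{-1}}$, uniformly over $t$.

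Next I would build the mixture supermartingale. For a fixed direction $u \in \R{d}$, set
\[
  M_t^u := \exp\!\left( \sum_{s \in [t]} \Big( \tfrac{1}{R}\, \eta_s\, u^\top X_s - \tfrac{1}{2} (u^\top X_s)^2 \Big) \right);
\]
conditional $R$-sub-Gaussianity of $\eta_s$ gives $\mathbb{E}\!\left[ \exp\big( \tfrac{1}{R} \eta_s u^\top X_s - \tfrac{1}{2} (u^\top X_s)^2 \big) \mid F_{s-1} \right] \le 1$, so $(M_t^u)_{t \ge 0}$ is a nonnegative supermartingale with $M_0^u = 1$. Averaging over $u \sim \mu := \mathcal{N}(0, V^{-1})$ (recall $V = \lambda I$) and evaluating the Gaussian integral in closed form yields another nonnegative supermartingale $\bar{M}_t := \int_{\R{d}} M_t^u \, \mathrm{d}\mu(u)$ with $\bar{M}_0 = 1$ and
\[
  \bar{M}_t = \left( \frac{\det V}{\det V_t} \right)^{1/2} \exp\!\left( \frac{1}{2R^2}\, \|S_t\|_{V_t^{-1}}^2 \right) .
\]

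Applying Ville's inequality to $(\bar{M}_t)$ --- or, to handle a possibly unbounded time horizon, the stopped-supermartingale / optional-stopping argument as in \citet{abbasi11} --- gives $\mathbb{P}\big[ \exists t \ge 1:\ \bar{M}_t \ge 1/\delta \big] \le \delta$. On the complementary event $\bar{M}_t < 1/\delta$ for all $t$, which rearranges to $\|S_t\|_{V_t^{-1}} \le R \sqrt{ 2\log( \det(V_t)^{1/2} \det(V)^{-1/2} / \delta ) }$ for every $t$; combining with the reduction above (at index $t-1$) yields the first inequality. For the second, when $\|X_s\|_2 \le L$, AM--GM on the eigenvalues gives $\det(V_{t-1}) \le \big( \operatorname{tr}(V_{t-1})/d \big)^d \le \big( \lambda + (t-1)L^2/d \big)^d$ while $\det(V) = \lambda^d$, and substituting and simplifying the logarithm turns the bound into $\|S_{t-1}\|_{V_{t-1}^{-1}} \le R \sqrt{ d\log\big( (1 + (t-1)L^2/\lambda)/\delta \big) }$, whence the claim.

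The main obstacle is the construction and analysis of the mixture supermartingale --- in particular, converting its maximal inequality into a bound valid \emph{simultaneously for all $t$} rather than at a prescribed $t$; this is precisely the self-normalized-martingale technique of \citet{abbasi11}, and the sub-Gaussian hypothesis is what makes $M_t^u$ a supermartingale. The Gaussian integral that produces the closed form for $\bar{M}_t$ and the trace/AM--GM estimate for $\det(V_{t-1})$ are routine once that machinery is in place.
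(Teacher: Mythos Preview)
The paper does not prove this lemma at all: it is stated under ``Known Results'' and attributed to \citet{abbasi11} without argument, so there is nothing in the paper to compare against. Your proposal reproduces the original method-of-mixtures proof from \citet{abbasi11} (reduction to the self-normalized noise term via the ridge decomposition, the exponential supermartingale $M_t^u$, Gaussian mixing to obtain $\bar M_t$, Ville/optional stopping for the uniform-in-$t$ guarantee, and the AM--GM/trace bound on $\det V_{t-1}$ for the dimension-dependent form), and the steps are correct.
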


\begin{lemma}[Lemma 10 of \citet{abbasi11}]\label{lem:det-tr}
  Suppose $X_1$, $X_2$, $\dots$, $X_t \in \R{d}$ and for any $1 \le s \le t$, $\|X_s\|_2 \le L$.
  Let $V_t = \lambda I + \sum_{s \in [t]} X_sX_s^\top$ for some $\lambda > 0$.
  Then,
  \[
    \det(V_{t}) \le (\lambda + tL^2/d)^d.
  \]
\end{lemma}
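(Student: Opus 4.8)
The plan is to use the standard determinant--trace (arithmetic--geometric mean) argument, which requires only elementary linear algebra. First I would observe that $V_t = \lambda I + \sum_{s \in [t]} X_s X_s^\top$ is symmetric positive definite: it is the sum of $\lambda I$ (with $\lambda > 0$) and finitely many rank-one positive semidefinite matrices $X_s X_s^\top$. Hence $V_t$ has $d$ real positive eigenvalues $\mu_1, \ldots, \mu_d$, and $\det(V_t) = \prod_{j=1}^d \mu_j$ while $\operatorname{tr}(V_t) = \sum_{j=1}^d \mu_j$.

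Next I would apply the AM--GM inequality to the (nonnegative) eigenvalues:
\[
  \det(V_t) = \prod_{j=1}^d \mu_j \le \left( \frac{1}{d} \sum_{j=1}^d \mu_j \right)^d = \left( \frac{\operatorname{tr}(V_t)}{d} \right)^d .
\]
Finally I would bound the trace using linearity of the trace together with $\operatorname{tr}(X_s X_s^\top) = \|X_s\|_2^2$ and the hypothesis $\|X_s\|_2 \le L$:
\[
  \operatorname{tr}(V_t) = \operatorname{tr}(\lambda I) + \sum_{s \in [t]} \operatorname{tr}(X_s X_s^\top) = \lambda d + \sum_{s \in [t]} \|X_s\|_2^2 \le \lambda d + t L^2 .
\]
Substituting this into the previous display yields $\det(V_t) \le ((\lambda d + t L^2)/d)^d = (\lambda + t L^2 / d)^d$, as claimed.

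There is essentially no obstacle here: the argument is two short steps (AM--GM on the eigenvalues, then a trace computation). The only point that warrants a moment of care is verifying positive-definiteness, so that the eigenvalues are real and nonnegative and AM--GM is legitimately applicable; this is immediate from $\lambda > 0$ and the positive semidefiniteness of each $X_s X_s^\top$. Since this is a verbatim restatement of Lemma~10 of \citet{abbasi11}, one could alternatively just cite that proof, but the self-contained argument above is short enough to include directly.
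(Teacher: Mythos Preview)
Your argument is correct and is exactly the standard AM--GM/trace proof from \citet{abbasi11}. Note that the present paper does not supply its own proof of this lemma at all---it is quoted as a known result under ``Known Results'' with only a citation---so there is nothing further to compare against; your self-contained version is fine to include.
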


\subsection{Proof of \lemref{lem:bound_x_and_large_x}}

We first show the lemma below, which proves the first part of \lemref{lem:bound_x_and_large_x}.
Note that this lemma is an extension of Lemma 11 of \citet{abbasi11}.
\begin{lemma}\label{lem:bound_x}
  Let $\{\{x_t(i)\}_{i \in [k]}\}_{t \in [T]}$ be any sequence such that $x_t(i) \in \R{d}$ and $\|x_t(i)\|_2 \le L$ for all $i \in [k]$ and $t \in [T]$.
  Let $V_t = \lambda I + \sum_{s \in [t]} \sum_{i \in [k]} x_s(i) x_s(i)^\top$ with $\lambda > 0$.
  Then, we have
  \begin{align}
    \sum_{t \in [T]} \sum_{i \in [k]} \min\left( \frac{1}{k}, \|x_t(i)\|_{V_{t-1}^{-1}}^2 \right)
    \le 2d \log(1 + L^2kT / (d\lambda)). \label{exp:bound_squared_x}
  \end{align}
  Accordingly, we have
  \begin{align}
    \sum_{t \in [T]} \sum_{i \in [k]} \min\left(\frac{1}{\sqrt{k}}, \|x_t(i)\|_{V_{t-1}^{-1}}\right)
    \le \sqrt{2dkT\log(1 + L^2kT / (d\lambda))}. \label{exp:bound_x}
  \end{align}
\end{lemma}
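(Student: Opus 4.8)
The plan is to establish \eqref{exp:bound_squared_x} first and then deduce \eqref{exp:bound_x} from it by Cauchy--Schwarz. The statement is a \emph{batched} elliptical potential bound: within a single round the matrix $V_{t-1}$ is shared by all $k$ arms (it is updated only at the end of the round), so the standard ``add one rank-one term at a time'' telescoping behind Lemma~11 of \citet{abbasi11} does not apply directly. Instead I would bound the contribution of an entire round at once by $\log(\det V_t/\det V_{t-1})$.

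First I would fix a round $t$ and set $y_i = V_{t-1}^{-1/2}x_t(i)$ for $i\in[k]$, so that $\|x_t(i)\|_{V_{t-1}^{-1}}^2 = \|y_i\|_2^2$ and $V_{t-1}^{-1/2}V_tV_{t-1}^{-1/2} = I + Y$ where $Y := \sum_{i\in[k]} y_iy_i^\top$. The per-round claim to prove is
\[
  \sum_{i\in[k]}\min\!\left(\tfrac{1}{k},\,\|y_i\|_2^2\right) \;\le\; 2\log\det(I+Y) \;=\; 2\bigl(\log\det V_t - \log\det V_{t-1}\bigr).
\]
Each summand is at most $1/k$ and at most $\|y_i\|_2^2$, so the left-hand side is at most $\min\bigl(1,\ \sum_i\|y_i\|_2^2\bigr) = \min(1,\ \mathrm{tr}\,Y) = \min\bigl(1,\ \sum_{j}\nu_j\bigr)$, where $\nu_1,\dots,\nu_d\ge 0$ are the eigenvalues of $Y$. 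Now combine two elementary facts: $\det(I+Y) = \prod_j(1+\nu_j) \ge 1 + \sum_j\nu_j$ (all cross terms are nonnegative), and $\min(1,s)\le 2\log(1+s)$ for all $s\ge 0$ (for $s\le 1$ use concavity of $\log(1+s)$ and $2\log 2 > 1$; for $s>1$ it is immediate). Applying these with $s = \sum_j\nu_j$ gives $\min(1,\sum_j\nu_j) \le 2\log(1+\sum_j\nu_j) \le 2\log\det(I+Y)$, which is the claim.

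Summing the per-round claim over $t\in[T]$ telescopes to
\[
  \sum_{t\in[T]}\sum_{i\in[k]}\min\!\left(\tfrac{1}{k},\,\|x_t(i)\|_{V_{t-1}^{-1}}^2\right) \le 2\bigl(\log\det V_T - \log\det(\lambda I)\bigr).
\]
Applying \lemref{lem:det-tr} to the $kT$ vectors $\{x_t(i)\}_{t\in[T],\,i\in[k]}$, each of norm at most $L$, gives $\det V_T \le (\lambda + kTL^2/d)^d$, hence $\log\det V_T - \log\det(\lambda I) \le d\log(1 + L^2kT/(d\lambda))$, which is \eqref{exp:bound_squared_x}. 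For \eqref{exp:bound_x}, note that $\min(1/\sqrt{k},z) = \sqrt{\min(1/k,z^2)}$ for $z\ge 0$, so applying Cauchy--Schwarz over the $kT$ summands and then \eqref{exp:bound_squared_x} gives
\[
  \sum_{t\in[T]}\sum_{i\in[k]}\min\!\left(\tfrac{1}{\sqrt{k}},\,\|x_t(i)\|_{V_{t-1}^{-1}}\right) \le \sqrt{kT}\cdot\sqrt{2d\log(1+L^2kT/(d\lambda))},
\]
which is \eqref{exp:bound_x}.

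The one genuine obstacle is the per-round step. The naive incremental argument fails here because $V_{t-1}$ is frozen across the $k$ arms, so the norms $\|x_t(i)\|_{V_{t-1}^{-1}}^2$ do not decrease with $i$ and one loses a factor of $k$; this is precisely why the truncation level must be $1/k$ rather than $1$. Collapsing the round to a single trace and then converting trace to $\log\det$ via $\prod_j(1+\nu_j)\ge 1+\sum_j\nu_j$ together with $\min(1,s)\le 2\log(1+s)$ is what makes the constant come out to $2d$; the remaining steps (telescoping, \lemref{lem:det-tr}, Cauchy--Schwarz) are routine.
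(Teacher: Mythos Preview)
Your proof is correct and follows the same overall skeleton as the paper: bound the per-round contribution by $2\log(\det V_t/\det V_{t-1})$, telescope, apply \lemref{lem:det-tr}, and then Cauchy--Schwarz for \eqref{exp:bound_x}. The only genuine difference is in how the per-round inequality is obtained.

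The paper writes $I+\sum_{i}y_iy_i^\top=\frac{1}{k}\sum_i(I+ky_iy_i^\top)$ and applies Jensen's inequality for the concave matrix function $\log\det(\cdot)$ to get $\log\det(I+Y)\ge\frac{1}{k}\sum_i\log(1+k\|y_i\|_2^2)$, and only then truncates each term via $\frac{1}{k}\log(1+k\|y_i\|_2^2)\ge\frac{1}{2}\min(1/k,\|y_i\|_2^2)$. You instead first collapse the whole sum to a single scalar, $\sum_i\min(1/k,\|y_i\|_2^2)\le\min(1,\mathrm{tr}\,Y)$, and then link $\mathrm{tr}\,Y$ to $\log\det(I+Y)$ via $\prod_j(1+\nu_j)\ge 1+\sum_j\nu_j$ together with $\min(1,s)\le 2\log(1+s)$. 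Your route is more elementary in that it avoids matrix concavity and uses only scalar inequalities plus the nonnegativity of the eigenvalues of $Y$; the paper's Jensen step retains the per-arm structure and is in principle sharper at the intermediate stage, but both arrive at exactly the same constant $2d$ in \eqref{exp:bound_squared_x}. Everything else (telescoping, \lemref{lem:det-tr}, and the observation $\min(1/\sqrt{k},z)=\sqrt{\min(1/k,z^2)}$ followed by Cauchy--Schwarz) matches the paper.
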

\begin{proof}
  We have
  \begin{align*}
    \log\det\left( V_T \right)
    &= \log\det\left( V_{T-1} + \sum_{i \in [k]} x_T(i) x_T(i)^\top \right) \\
    &= \log\det\left( V_{T-1} \right) + \log\det\left( I + \sum_{i \in [k]} V_{T-1}^{-1/2}x_T(i) (V_{T-1}^{-1/2}x_T(i))^\top \right) \\
    &= \log\det\left( V_{T-1} \right) + \log\det\left( \sum_{i \in [k]} \frac{1}{k} \left( I + k V_{T-1}^{-1/2}x_T(i) (V_{T-1}^{-1/2}x_T(i))^\top \right) \right) \\
    &\ge \log\det\left( V_{T-1} \right) + \sum_{i \in [k]} \frac{1}{k} \log\det\left( I + k V_{T-1}^{-1/2}x_T(i) (V_{T-1}^{-1/2}x_T(i))^\top \right) \\
    &\ge \log\det\left( \lambda I \right) + \sum_{t \in [T]} \sum_{i \in [k]} \frac{1}{k} \log\det\left( I + k V_{t-1}^{-1/2}x_t(i) (V_{t-1}^{-1/2}x_t(i))^\top \right),
  \end{align*}
  where the first inequality follows from Jensen's inequality applied to the concave function $\log\det(\cdot)$.
  Then, we have
  \begin{align*}
    &\sum_{i \in [k]} \frac{1}{k} \log\det\left( I + k V_{t-1}^{-1/2}x_t(i) (V_{t-1}^{-1/2}x_t(i))^\top \right) \\
    = &\sum_{i \in [k]} \frac{1}{k} \log\left( 1 + k x_t(i)^\top V_{t-1}^{-1}x_t(i) \right) \\
    \ge &\sum_{i \in [k]} \frac{1}{k} \log\left( 1 + \min\left( 1, k x_t(i)^\top V_{t-1}^{-1}x_t(i) \right) \right) \\
    \ge &\sum_{i \in [k]} \frac{1}{2} \min\left( \frac{1}{k}, x_t(i)^\top V_{t-1}^{-1}x_t(i) \right)
  \end{align*}
  for all $t \in [T]$,
  where the second inequality is derived from $2\log(1+x) \ge x$ for all $x \in [0, 1]$.
  Therefore, we obtain
  \begin{align*}
    \sum_{t \in [T]} \sum_{i \in [k]} \frac{1}{2} \min\left( \frac{1}{k}, x_t(i)^\top V_{t-1}^{-1}x_t(i) \right) \le \log\det\left( V_T \right) - \log\det(\lambda I).
  \end{align*}
  Applying \lemref{lem:det-tr} to the above inequality, we obtain \expref{exp:bound_squared_x}.
  Finally, we obtain \expref{exp:bound_x} by applying the Cauchy-Schwarz inequality to \expref{exp:bound_squared_x}.
\end{proof}

Using \lemref{lem:bound_x},
we prove the second part of \lemref{lem:bound_x_and_large_x}.
\begin{lemma}\label{lem:bound_num_of_large_x}
  Let $\{\{x_t(i)\}_{i \in [k]}\}_{t \in [T]}$ be any sequence such that $x_t(i) \in \R{d}$ and $\|x_t(i)\|_2 \le L$ for all $i \in [k]$ and $t \in [T]$.
  Let $V_t = \lambda I + \sum_{s \in [t]} \sum_{i \in [k]} x_s(i) x_s(i)^\top$ with $\lambda > 0$.
  Then, we have
  \begin{align*}
    \sum_{t \in [T]} \sum_{i \in [k]} \mathds{1}\left(\|x_t(i)\|_{V_{t-1}^{-1}} > 1/\sqrt{k}\right) \le 2dk \log(1 + L^2kT / (d\lambda)).
  \end{align*}
\end{lemma}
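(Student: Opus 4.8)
The plan is to reduce this counting statement to the summable quantity already controlled by \lemref{lem:bound_x}. First I would rewrite the event appearing in the indicator: $\|x_t(i)\|_{V_{t-1}^{-1}} > 1/\sqrt{k}$ is exactly the event $\|x_t(i)\|_{V_{t-1}^{-1}}^2 > 1/k$, so that the quantity we want to bound is $\sum_{t \in [T]} \sum_{i \in [k]} \mathds{1}(\|x_t(i)\|_{V_{t-1}^{-1}}^2 > 1/k)$. The matrices $V_t$ here are defined with the same double sum over all $k$ arms as in \lemref{lem:bound_x}, so that lemma, and in particular the bound \expref{exp:bound_squared_x} on $\sum_{t,i} \min(1/k, \|x_t(i)\|_{V_{t-1}^{-1}}^2)$, applies directly.

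The key step is the elementary pointwise inequality $\mathds{1}(\|x_t(i)\|_{V_{t-1}^{-1}}^2 > 1/k) \le k \min\!\big(1/k,\, \|x_t(i)\|_{V_{t-1}^{-1}}^2\big)$, valid for every $t \in [T]$ and $i \in [k]$. Indeed, when the event holds the truncation $\min(1/k, \|x_t(i)\|_{V_{t-1}^{-1}}^2)$ equals $1/k$, so the right-hand side equals $1$ and the inequality is an equality; when the event fails, the left-hand side is $0$ while the right-hand side is nonnegative.

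Summing this inequality over $i \in [k]$ and $t \in [T]$ and factoring the constant $k$ out of the sum gives $\sum_{t \in [T]} \sum_{i \in [k]} \mathds{1}(\|x_t(i)\|_{V_{t-1}^{-1}} > 1/\sqrt{k}) \le k \sum_{t \in [T]} \sum_{i \in [k]} \min(1/k, \|x_t(i)\|_{V_{t-1}^{-1}}^2)$. Applying \expref{exp:bound_squared_x} of \lemref{lem:bound_x} to the right-hand side bounds it by $k \cdot 2d \log(1 + L^2 kT/(d\lambda)) = 2dk \log(1 + L^2 kT/(d\lambda))$, which is the claimed bound. Together with \expref{exp:bound_x} of \lemref{lem:bound_x}, this also completes the proof of both parts of \lemref{lem:bound_x_and_large_x}, since $\tilde{O}(\sqrt{2dkT\log(1 + L^2kT/(d\lambda))}) = \tilde{O}(\sqrt{dkT})$ and $\tilde{O}(2dk\log(1 + L^2kT/(d\lambda))) = \tilde{O}(dk)$.

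I do not expect a real obstacle here: once \lemref{lem:bound_x} is available, the entire argument is the one-line pointwise comparison above. The only points requiring care are recognizing that the correct comparison object is the \emph{truncated squared} norm $\min(1/k, \|x_t(i)\|_{V_{t-1}^{-1}}^2)$ rather than the truncated norm, and checking that the definition of $V_t$ matches that of \lemref{lem:bound_x} so the latter can be invoked verbatim.
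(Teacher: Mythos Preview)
Your proposal is correct and is essentially identical to the paper's own proof: the paper also uses the pointwise inequality $\frac{1}{k}\mathds{1}\bigl(\|x_t(i)\|_{V_{t-1}^{-1}} > 1/\sqrt{k}\bigr) \le \min\bigl(\tfrac{1}{k}, \|x_t(i)\|_{V_{t-1}^{-1}}^2\bigr)$ and then applies \expref{exp:bound_squared_x} from \lemref{lem:bound_x}.
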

\begin{proof}
  From \lemref{lem:bound_x},
  we obtain
  \begin{align*}
    \frac{1}{k} \sum_{t \in [T]} \sum_{i \in [k]} \mathds{1}\left(\|x_t(i)\|_{V_{t-1}^{-1}} > 1/\sqrt{k}\right)
    &\le \sum_{t \in [T]} \sum_{i \in [k]} \min\left( \frac{1}{k}, \|x_t(i)\|_{V_{t-1}^{-1}}^2 \right) \\
    &\le 2d \log(1 + L^2kT / (d\lambda)).
  \end{align*}
\end{proof}

\subsection{Proof of \thmref{thm:c2ucb}}

Recall that $\beta_t(\delta) = R\sqrt{d\log\left( \frac{1 + L^2 kt / \lambda}{\delta} \right)} + S\sqrt{\lambda}$ for $\delta > 0$.
\thmref{thm:c2ucb} is a corollary of the following theorem.
\begin{theorem}\label{thm:c2ucb_supp}
  If $\alpha_t = \beta_t(\delta)$,
  the C${}^2$UCB algorithm has the following regret bound with probability $1 - \delta$:
  \begin{align*}
    R(T)
    &= O\left(
      \beta_T(\delta) \sqrt{dkT \log\left(1 + \frac{L^2kT}{d\lambda}\right)} +
      \min\left(\frac{L\beta_T(\delta)}{\sqrt{\lambda}}, Bk\right) dk \log\left(1 + \frac{L^2kT}{d\lambda}\right)
    \right).
  \end{align*}
\end{theorem}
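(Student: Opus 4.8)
The plan is to run the standard optimistic (UCB) regret decomposition, but to split the arms chosen in each round according to the size of their confidence width $\|x_t(i)\|_{V_{t-1}^{-1}}$ and to charge the ``wide'' arms using the counting estimate \eqref{exp:bound_num_of_large_x} of \lemref{lem:bound_x_and_large_x} rather than the width-summation estimate \eqref{exp:bound_x_main}. First I would apply \lemref{lem:confidence} (with the filtration arranged so that each chosen arm's feature is predictable and its noise conditionally $R$-sub-Gaussian, as in \asmref{asm:noise}) to get, with probability $1-\delta$, that $\|\hat\theta_t-\theta^*\|_{V_{t-1}}\le\beta_t(\delta)$ for all $t$. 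By Cauchy--Schwarz this gives $|(\hat\theta_t-\theta^*)^\top x_t(i)|\le\beta_t(\delta)\|x_t(i)\|_{V_{t-1}^{-1}}$ for every arm $i$, hence $\hat r_t(i)\ge{\theta^*}^\top x_t(i)$ (optimism) and $\hat r_t(i)-{\theta^*}^\top x_t(i)\le 2\beta_t(\delta)\|x_t(i)\|_{V_{t-1}^{-1}}\le 2\beta_T(\delta)\|x_t(i)\|_{V_{t-1}^{-1}}$. Optimism applied to $I_t^*$ together with the optimality of $I_t$ for $\sum_{i\in I}\hat r_t(i)$ kills the comparison term and leaves $R(T)\le\sum_{t\in[T]}\sum_{i\in I_t}(\hat r_t(i)-{\theta^*}^\top x_t(i))$, the decomposition already displayed in the Proposed Algorithm section.

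Next, set $J'_t=\{i\in I_t:\|x_t(i)\|_{V_{t-1}^{-1}}>1/\sqrt{k}\}$. For $i\in I_t\setminus J'_t$ we have $\|x_t(i)\|_{V_{t-1}^{-1}}=\min(1/\sqrt{k},\|x_t(i)\|_{V_{t-1}^{-1}})$, so, relabelling each $I_t$ as a length-$k$ list so that the algorithm's $V_t$ matches the hypothesis of \lemref{lem:bound_x}, the bound \eqref{exp:bound_x} gives $\sum_{t}\sum_{i\in I_t\setminus J'_t}\|x_t(i)\|_{V_{t-1}^{-1}}\le\sqrt{2dkT\log(1+L^2kT/(d\lambda))}$; multiplying by $2\beta_T(\delta)$ produces the first term of the claimed bound. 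It then remains to charge the pairs $(t,i)$ with $i\in J'_t$, and here I would give two separate bounds and take the smaller. \emph{Way 1:} for $i\in J'_t$, $\|x_t(i)\|_{V_{t-1}^{-1}}\le\|x_t(i)\|_2/\sqrt{\lambda}\le L/\sqrt{\lambda}$, so each such pair contributes at most $2\beta_T(\delta)L/\sqrt{\lambda}$ to the sum above, and by \lemref{lem:bound_num_of_large_x} there are at most $2dk\log(1+L^2kT/(d\lambda))$ such pairs, giving $O\big((L\beta_T(\delta)/\sqrt{\lambda})\,dk\log(1+L^2kT/(d\lambda))\big)$. \emph{Way 2:} instead, for the rounds $t$ with $J'_t\ne\emptyset$ use the trivial per-round regret bound $\sum_{i\in I_t^*}{\theta^*}^\top x_t(i)-\sum_{i\in I_t}{\theta^*}^\top x_t(i)\le 2Bk$, while for the remaining rounds (where $I_t\setminus J'_t=I_t$) the previous argument again gives $O(\beta_T(\delta)\sqrt{dkT\log(1+L^2kT/(d\lambda))})$; since each round with $J'_t\ne\emptyset$ contains a wide arm, \lemref{lem:bound_num_of_large_x} bounds the number of such rounds by $2dk\log(1+L^2kT/(d\lambda))$, costing $O\big(Bk\cdot dk\log(1+L^2kT/(d\lambda))\big)$. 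Combining the first term with the minimum of Way~1 and Way~2 proves \thmref{thm:c2ucb_supp}; \thmref{thm:c2ucb} then follows by substituting $\lambda=R^2S^{-2}d$, so that $\beta_T(\delta)=\tilde{O}(R\sqrt{d})$, $L\beta_T(\delta)/\sqrt{\lambda}=\tilde{O}(LS)$, and the logarithmic factor is $\tilde{O}(1)$.

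The step that needs the most care is the combination: one must observe that the two accountings share the same ``narrow-arm'' contribution $O(\beta_T(\delta)\sqrt{dkT\log(\cdot)})$ and differ only in how they pay for the wide arms, so that a genuine $\min$ of the two second-order terms results rather than their sum; and one must notice that \lemref{lem:bound_x_and_large_x} is the tool needed \emph{twice} --- once to count wide $(t,i)$ pairs (Way~1) and once to count rounds containing a wide arm (Way~2). Everything else is the standard elliptical-potential computation, already packaged in Lemmas~\ref{lem:bound_x} and~\ref{lem:bound_num_of_large_x}, so I expect no further obstacle.
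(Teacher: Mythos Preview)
Your proposal is correct and follows essentially the same route as the paper: apply \lemref{lem:confidence} for the confidence ellipsoid, use optimism and the optimality of $I_t$ to reduce to $\sum_{t,i}(\hat r_t(i)-{\theta^*}^\top x_t(i))$, split the chosen arms by whether $\|x_t(i)\|_{V_{t-1}^{-1}}$ exceeds $1/\sqrt{k}$, bound the narrow arms via \lemref{lem:bound_x}, and bound the wide arms in two ways (per-pair via $L/\sqrt{\lambda}$, or per-round via $2Bk$) using the count from \lemref{lem:bound_num_of_large_x}, then take the minimum. The only cosmetic difference is that the paper carries the per-round cap $\min(\cdot,2Bk)$ through the decomposition from the outset (so that both ways share the narrow-arm term automatically and only the wide-arm piece is bounded twice), whereas you recover the same effect by re-running the round-level split in Way~2; both arguments yield the identical final bound.
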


We note that this theorem holds even if the definition of $B$ is replaced with 
\begin{align}
  \forall t \in [T], \forall i, j \in [N],
  |{\theta^*}^\top (x_t(i) - x_t(j))| \le 2B. \label{exp:def_of_B}
\end{align}
Condition \expref{exp:def_of_B} is weaker than the original definition of $B$, as \expref{exp:def_of_B} is derived from $|{\theta^*}^\top x_t(i)| \le B$ for all $t \in [T]$ and $i \in [N]$.

\begin{proof}[Proof of \thmref{thm:c2ucb_supp}]
  Let $J_t = \{ i \in [N] \mid \|x_t(i)\|_{V_{t-1}^{-1}} > 1/\sqrt{k} \}$ and $J'_t = I_t \cap J_t$.
  From \lemref{lem:confidence}, we have $\|\hat{\theta}_t - \theta^*\|_{V_{t-1}} \le \beta_t(\delta)$ for all $t \in [T]$ with probability $1 - \delta$.
  Then, we have
  \begin{align*}
    \hat{r}_t(i) - {\theta^*}^\top x_t(i) 
    &= \beta_t(\delta) \|x_t(i)\|_{V_{t-1}^{-1}} + (\hat{\theta}_t - \theta^*)^\top x_t(i) \\
    &\ge \left( \beta_t(\delta) - \|\hat{\theta}_t - \theta^*\|_{V_t} \right) \|x_t(i)\|_{V_{t-1}^{-1}} \\
    &\ge 0
  \end{align*}
  for all $i \in [N]$ and $t \in [T]$.
  Similarly, we have $\hat{\theta}_t^\top x_t(i) - \beta_t(\delta) \|x_t(i)\|_{V_{t-1}^{-1}} - {\theta^*}^\top x_t(i) \le 0$ for all $i \in [N]$ and $t \in [T]$.
  Thus, it follows that
  \begin{align}
    R(T)
    &= \sum_{t \in [T]} \min\left( \sum_{i \in I_t^*} {\theta^*}^\top x_t(i) - \sum_{i \in I_t} {\theta^*}^\top x_t(i), 2Bk \right) \nonumber \\
    &\le \sum_{t \in [T]} \min\left( \sum_{i \in I_t^*} \hat{r}_t(i) - \sum_{i \in I_t} {\theta^*}^\top x_t(i), 2Bk \right) \nonumber \\
    &\le \sum_{t \in [T]} \min\left( \sum_{i \in I_t} (\hat{r}_t(i) - {\theta^*}^\top x_t(i)), 2Bk \right) \nonumber \\
    &\le \sum_{t \in [T]} \min\left( \sum_{i \in I_t} 2\beta_t(\delta) \|x_t(i)\|_{V_{t-1}^{-1}}, 2Bk \right) \nonumber \\
    &\le \sum_{t \in [T]} \sum_{i \in I_t \setminus J'_t} 2\beta_t(\delta) \|x_t(i)\|_{V_{t-1}^{-1}} +
    \sum_{t \in [T]} \min\left( \sum_{i \in J'_t} 2\beta_t(\delta) \|x_t(i)\|_{V_{t-1}^{-1}}, 2Bk \right), \label{exp:c2ucb_gen_reg_decomp}
  \end{align}
  where the second inequality follows since $I_t = \argmax_{I \in S_t} \sum_{i \in I} \hat{r}_t(i)$,
  and the last inequality is derived from $\beta_t(\delta) \|x_t(i)\|_{V_{t-1}^{-1}} > 0$ for all $\delta \in (0, 1)$, $i \in [N]$, and $t \in [T]$.

  From \lemref{lem:bound_x}, we bound the first term of \expref{exp:c2ucb_gen_reg_decomp} as follows:
  \begin{align*}
    \sum_{t \in [T]} \sum_{i \in I_t \setminus J'_t} 2\beta_t(\delta) \|x_t(i)\|_{V_{t-1}^{-1}}
    &\le 2\beta_T(\delta) \sum_{t \in [T]} \sum_{i \in I_t \setminus J'_t} \|x_t(i)\|_{V_{t-1}^{-1}} \\
    &\le 2\beta_T(\delta) \sqrt{2dkT\log(1 + L^2kT / (d\lambda))}.
  \end{align*}

  We show that the second term of \expref{exp:c2ucb_gen_reg_decomp} is $O\left(\min\left(\frac{L\beta_T(\delta)}{\sqrt{\lambda}}, Bk\right) dk \log\left(1 + \frac{L^2kT}{d\lambda}\right)\right)$
  by bounding this term in two ways.
  We have $\|x_t(i)\|_{V_{t-1}^{-1}}^2 \le L^2 / \lambda$ for all $i \in [N]$ and $t \in [T]$.
  From this fact and \lemref{lem:bound_num_of_large_x}, we obtain
  \begin{align*}
    \sum_{t \in [T]} \min\left( \sum_{i \in J'_t} 2\beta_t(\delta) \|x_t(i)\|_{V_{t-1}^{-1}}, 2Bk \right)
    &\le \sum_{t \in [T]} \sum_{i \in J'_t} 2\beta_t(\delta) \|x_t(i)\|_{V_{t-1}^{-1}} \\
    &\le 2\beta_T(\delta) \sum_{t \in [T]} \sum_{i \in J'_t} \|x_t(i)\|_{V_{t-1}^{-1}} \\
    &\le 4\beta_T(\delta) \frac{Ldk}{\sqrt{\lambda}} \log(1 + L^2kT / (d\lambda)).
  \end{align*}
  Alternatively,
  it follows from \lemref{lem:bound_num_of_large_x} that
  \begin{align*}
    \sum_{t \in [T]} \min\left( \sum_{i \in J'_t} 2\beta_t(\delta) \|x_t(i)\|_{V_{t-1}^{-1}}, 2Bk \right)
    &\le \sum_{t \in [T]} 2|J'_t| Bk \\
    &\le 4B dk^2 \log(1 + L^2kT/(d\lambda)).
  \end{align*}
  Combining these inequalites, we obtain \thmref{thm:c2ucb_supp}.
\end{proof}

\subsection{Proof of \thmref{thm:c2ucb_sp}}

\thmref{thm:c2ucb_sp} is a corollary of the theorem below.
Note that this theorem holds when $B$ is a parameter satisfying \expref{exp:def_of_B}.
\begin{theorem}
  Assume that $S_t$ satisfies \expref{exp:partition_matroid} for a partition $\{B_t(j)\}_{j \in [M]}$ and $\{d_t(j)\}_{j \in [M]}$ for all $t \in [T]$.
  Then, if $\alpha_t = \beta_t(\delta)$,
  the C${}^2$UCB algorithm has the following regret bound with probability $1 - \delta$:
  \begin{align*}
    R(T) = O\left(
      \beta_T(\delta) \sqrt{dkT \log\left(1 + \frac{L^2kT}{d\lambda}\right)} +
      Bdk \log\left(1 + \frac{L^2kT}{d\lambda}\right)
    \right).
  \end{align*}
\end{theorem}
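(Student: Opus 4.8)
The plan is to re-run the decomposition from the proof of \thmref{thm:c2ucb_supp}, but to replace the crude bound on the arms with large confidence width by a bound involving $B$, exploiting the block structure of a partition matroid. First I would invoke \lemref{lem:confidence} to fix, with probability $1 - \delta$, the event $\|\hat{\theta}_t - \theta^*\|_{V_{t-1}} \le \beta_t(\delta)$ for all $t$; on this event $\hat{r}_t(i) \ge {\theta^*}^\top x_t(i)$ and ${\theta^*}^\top x_t(i) \ge \hat{r}_t(i) - 2\beta_t(\delta)\|x_t(i)\|_{V_{t-1}^{-1}}$ for every arm, exactly as before. Set $J_t = \{ i \in [N] \mid \|x_t(i)\|_{V_{t-1}^{-1}} > 1/\sqrt{k} \}$ and $J'_t = I_t \cap J_t$.

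The key structural observation is that, since $S_t$ is the base family of a partition matroid, the greedy solution $I_t = \argmax_{I \in S_t} \sum_{i \in I} \hat{r}_t(i)$ picks, inside each block $B_t(j)$, the $d_t(j)$ arms with the largest $\hat{r}_t$-values (with any fixed tie-breaking rule). Since $J'_t \cap B_t(j) \subseteq I_t \cap B_t(j)$, deleting these forced arms leaves $I_t \cap B_t(j) \setminus J'_t$, which is precisely the set of the top $d_t(j) - |B_t(j) \cap J'_t|$ values of $\hat{r}_t$ among $B_t(j) \setminus J'_t$; hence $I_t \setminus J'_t = \argmax_{I \in S'_t} \sum_{i \in I} \hat{r}_t(i)$ for the residual partition matroid $S'_t$ on $[N] \setminus J'_t$ with block sizes $d_t(j) - |B_t(j) \cap J'_t|$, as in the sketch of \thmref{thm:c2ucb_sp}. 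I would then build $J^*_t \subseteq I^*_t$ with $|J^*_t| = |J'_t|$ and $I^*_t \setminus J^*_t \in S'_t$ by taking $I^*_t \cap J'_t$ and, in each block, enough additional arms of $I^*_t \cap B_t(j)$ to match the residual cardinalities (possible block by block since $|I^*_t \cap B_t(j)| = d_t(j) \ge d_t(j) - |B_t(j) \cap J'_t|$). This yields
\begin{align*}
  R(T) &= \sum_{t \in [T]} \left( \sum_{i \in I^*_t \setminus J^*_t} {\theta^*}^\top x_t(i) - \sum_{i \in I_t \setminus J'_t} {\theta^*}^\top x_t(i) \right) \\
  &\quad + \sum_{t \in [T]} \left( \sum_{i \in J^*_t} {\theta^*}^\top x_t(i) - \sum_{i \in J'_t} {\theta^*}^\top x_t(i) \right).
\end{align*}

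For the first sum, optimality of $I_t \setminus J'_t$ over $S'_t$ with respect to $\hat{r}_t$ gives $\sum_{i \in I^*_t \setminus J^*_t} \hat{r}_t(i) \le \sum_{i \in I_t \setminus J'_t} \hat{r}_t(i)$, and combining this with the two optimism inequalities the first sum is at most $\sum_{t \in [T]} \sum_{i \in I_t \setminus J'_t} 2\beta_t(\delta) \|x_t(i)\|_{V_{t-1}^{-1}}$; since every arm in $I_t \setminus J'_t$ has $\|x_t(i)\|_{V_{t-1}^{-1}} \le 1/\sqrt{k}$, \lemref{lem:bound_x} bounds this by $2\beta_T(\delta)\sqrt{2dkT\log(1 + L^2kT/(d\lambda))}$. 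For the second sum I would use $|J^*_t| = |J'_t|$ to pair the arms and bound the per-round contribution by $2B|J'_t|$ (needing only the weaker condition \expref{exp:def_of_B}), and then \lemref{lem:bound_num_of_large_x} gives $\sum_{t \in [T]} |J'_t| \le 2dk\log(1 + L^2kT/(d\lambda))$, so the second sum is $O(Bdk\log(1 + L^2kT/(d\lambda)))$. Adding the two bounds gives the claim, and \thmref{thm:c2ucb_sp} follows by substituting $\alpha_t = \beta_t(\delta)$ and $\lambda = R^2 S^{-2} d$.

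The step I expect to be the main obstacle is making the structural observation fully rigorous: verifying that $I_t \setminus J'_t$ is genuinely the $\hat{r}_t$-maximal member of $S'_t$ and that a valid $J^*_t$ always exists, including a consistent tie-breaking rule in the greedy so that removing $J'_t$ never promotes an arm lying outside $I_t \cap B_t(j)$. This is exactly the place where the argument uses the partition matroid structure and does not extend to general constraints; everything else is a routine repetition of the proof of \thmref{thm:c2ucb_supp} with \lemref{lem:bound_x_old} replaced by \lemref{lem:bound_x} and \lemref{lem:bound_num_of_large_x}.
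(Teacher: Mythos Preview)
Your proposal is correct and follows essentially the same route as the paper: the same definitions of $J_t$, $J'_t$, $S'_t$, and $J^*_t$, the same two-term decomposition of $R(T)$, the same use of optimality of $I_t \setminus J'_t$ over $S'_t$ plus \lemref{lem:bound_x} for the first term, and $|J^*_t| = |J'_t|$ together with \lemref{lem:bound_num_of_large_x} for the second. The only cosmetic difference is that the paper splits the first-term bound further into $R^{alg}(T)$ and $R^{est}(T)$ (\clmref{clm:r^alg}, \clmref{clm:r^est}) rather than writing $2\beta_t(\delta)\|x_t(i)\|_{V_{t-1}^{-1}}$ directly, and it does not comment on tie-breaking; the structural observation you flag as the main obstacle is exactly what the paper asserts without elaboration.
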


\begin{proof}
  Let $J_t = \{ i \in [N] \mid \|x_t(i)\|_{V_{t-1}^{-1}} > 1/\sqrt{k} \}$ and $J'_t = I_t \cap J_t$.
  We separate chosen arms into two groups:
  $\{J'_t\}_{t \in [T]}$ and the other arms.
  From the optimality of $I_t$,
  $I_t \cap B_t(j)$ is the top $d_t(j)$ arms in terms of $\hat{r}_t(\cdot)$ in $B_t(j)$ for each $j \in [M]$.
  Thus, we obtain
  \begin{align}
    I_t \setminus J'_t = \argmax_{I \in S'_t} \sum_{i \in I} \hat{r}_t(i) \label{exp:opt_r_hat}
  \end{align}
  for all $t \in [T]$,
  where
  \begin{align*}
    S'_t = \left\{ 
      I \subseteq [N] \setminus J'_t \mid
      |I \cap B_t(j)| = d_t(j) - |B_t(j) \cap J'_t|,
      \forall j \in [M]
    \right\}.
  \end{align*}

  Let $J^*_t$ be a subset of $I^*_t$ that consists of the arms in $I^*_t \cap J'_t$,
  and $|B_t(j)\cap J'_t| - |I^*_t \cap J'_t \cap B_t(j)|$ arms chosen arbitrarily from $I^*_t \cap B_t(j)$
  for each $j \in [M]$.
  Then, $I^*_t \setminus J^*_t \in S'_t$
  and $|J^*_t| = |J'_t|$ for all $t \in [T]$.
  Similarly to $I_t$,
  we divide $I^*_t$ into $I^*_t \setminus J^*_t$ and $J^*_t$.
  This gives
  \begin{align}
    R(T) =
    \sum_{t \in [T]} \left( \sum_{i \in I_t^* \setminus J_t^*} {\theta^*}^\top x_t(i) - \sum_{i \in I_t \setminus J'_t} {\theta^*}^\top x_t(i) \right) +
    \sum_{t \in [T]} \left( \sum_{i \in J_t^*} {\theta^*}^\top x_t(i) - \sum_{i \in J'_t} {\theta^*}^\top x_t(i) \right). \label{exp:c2ucb_sp_reg_decomp}
  \end{align}

  For the second term of \expref{exp:c2ucb_sp_reg_decomp},
  from \lemref{lem:bound_num_of_large_x} and the definition of $B$,
  we have
  \begin{align*}
    \sum_{t \in [T]} \left( \sum_{i \in J_t^*} {\theta^*}^\top x_t(i) - \sum_{i \in J'_t} {\theta^*}^\top x_t(i) \right)
    \le 4Bdk \log(1 + L^2kT / (d\lambda)).
  \end{align*}

  It remains to be shown that the first term of \expref{exp:c2ucb_sp_reg_decomp} is $O\left( \beta_T(\delta) \sqrt{dkT \log\left(1+ \frac{L^2kT}{d\lambda}\right)} \right)$.
  By the same discussion as the proof of \thmref{thm:c2ucb_supp},
  we obtain $\hat{r}_t(i) \ge {\theta^*}^\top x_t(i)$ for all $i \in [N]$ and $t \in [T]$ with probability $1 - \delta$.
  Then, we have
  \begin{align*}
    &\sum_{t \in [T]} \left( \sum_{i \in I_t^* \setminus J_t^*} {\theta^*}^\top x_t(i) - \sum_{i \in I_t \setminus J'_t} {\theta^*}^\top x_t(i) \right) \\
    \le &\sum_{t \in [T]} \left( \sum_{i \in I_t^* \setminus J_t^*} \hat{r}_t(i) - \sum_{i \in I_t \setminus J'_t} {\theta^*}^\top x_t(i) \right) \\
    \le &\sum_{t \in [T]} \sum_{i \in I_t \setminus J'_t} \left( \hat{r}_t(i) - {\theta^*}^\top x_t(i) \right) \\
    = &\sum_{t \in [T]} \sum_{i \in I_t  \setminus J'_t} \left( \hat{r}_t(i) - \hat{\theta}_t^\top x_t(i) \right) + \sum_{t \in [T]} \sum_{i \in I_t  \setminus J'_t} \left( \hat{\theta}_t - \theta^* \right)^\top x_t(i),
  \end{align*}
  where the second inequality is derived from \expref{exp:opt_r_hat}.
  We define $R^{alg}(T)$ and $R^{est}(T)$ as
  \begin{align*}
    R^{alg}(T) &= \sum_{t \in [T]} \sum_{i \in I_t  \setminus J'_t} \left( \hat{r}_t(i) - \hat{\theta}_t^\top x_t(i) \right) \quad \mathrm{and}\\
    R^{est}(T) &= \sum_{t \in [T]} \sum_{i \in I_t  \setminus J'_t} \left( \hat{\theta}_t - \theta^* \right)^\top x_t(i).
  \end{align*}
  From \clmref{clm:r^alg} and \clmref{clm:r^est} below,
  we can bound $R^{alg}(T)$ and $R^{est}(T)$, respectively, which gives the desired bound of the first term of \expref{exp:c2ucb_sp_reg_decomp}.
\end{proof}

\begin{claim}\label{clm:r^alg}
  \begin{align*}
    R^{alg}(T) = O\left( \beta_T(\delta) \sqrt{dkT \log\left(1+ \frac{L^2kT}{d\lambda}\right)} \right)
  \end{align*}
\end{claim}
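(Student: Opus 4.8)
The plan is to observe that $R^{alg}(T)$ is exactly the cumulative confidence-width term of the C${}^2$UCB algorithm restricted to the ``small-width'' arms, and then to bound it directly via \lemref{lem:bound_x}. First I would unfold the definition of $\hat r_t$ in \algoref{alg:c2ucb}: since $\hat r_t(i) = \hat\theta_t^\top x_t(i) + \alpha_t\|x_t(i)\|_{V_{t-1}^{-1}}$ and $\alpha_t = \beta_t(\delta)$, we obtain
\begin{align*}
  R^{alg}(T) = \sum_{t\in[T]} \sum_{i\in I_t\setminus J'_t} \beta_t(\delta)\,\|x_t(i)\|_{V_{t-1}^{-1}}.
\end{align*}

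Next I would exploit two simple facts. Since $t\mapsto\beta_t(\delta)$ is nondecreasing, we may replace $\beta_t(\delta)$ by $\beta_T(\delta)$ and pull it outside the sum. And since every $i \in I_t\setminus J'_t$ satisfies $i\notin J_t$, i.e. $\|x_t(i)\|_{V_{t-1}^{-1}} \le 1/\sqrt{k}$, we may write $\|x_t(i)\|_{V_{t-1}^{-1}} = \min(1/\sqrt{k}, \|x_t(i)\|_{V_{t-1}^{-1}})$ and then enlarge the index set from $I_t\setminus J'_t$ to all of $I_t$. This gives
\begin{align*}
  R^{alg}(T) \le \beta_T(\delta) \sum_{t\in[T]} \sum_{i\in I_t} \min\!\left(\frac{1}{\sqrt{k}}, \|x_t(i)\|_{V_{t-1}^{-1}}\right).
\end{align*}
Because $|I_t| = k$ and the matrix $V_t$ in \algoref{alg:c2ucb} is updated with exactly the $k$ chosen arms, $V_t = V_{t-1} + \sum_{i\in I_t} x_t(i)x_t(i)^\top$, the inner double sum fits the hypotheses of \lemref{lem:bound_x} after relabelling the $k$ arms of $I_t$ as $x_t(1),\dots,x_t(k)$. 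Applying \expref{exp:bound_x} then yields $\sum_{t\in[T]}\sum_{i\in I_t}\min(1/\sqrt{k},\|x_t(i)\|_{V_{t-1}^{-1}}) \le \sqrt{2dkT\log(1 + L^2kT/(d\lambda))}$, and the claimed bound follows immediately.

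There is no real obstacle here; the only points requiring care are (i) checking that $\hat r_t(i) - \hat\theta_t^\top x_t(i)$ is precisely the C${}^2$UCB confidence width $\beta_t(\delta)\|x_t(i)\|_{V_{t-1}^{-1}}$, (ii) using the monotonicity of $\beta_t(\delta)$ in $t$ to extract $\beta_T(\delta)$, and (iii) verifying that the definition of $V_t$ in the algorithm matches the one in \lemref{lem:bound_x}, so that the lemma applies verbatim with the chosen arms $I_t$ in the role of $\{x_t(i)\}_{i\in[k]}$.
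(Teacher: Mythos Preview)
Your proposal is correct and follows essentially the same approach as the paper: unfold $\hat r_t(i)-\hat\theta_t^\top x_t(i)=\beta_t(\delta)\|x_t(i)\|_{V_{t-1}^{-1}}$, replace $\beta_t(\delta)$ by $\beta_T(\delta)$ via monotonicity, and invoke \lemref{lem:bound_x}. The only difference is that you make explicit the step the paper leaves implicit, namely that for $i\in I_t\setminus J'_t$ one has $\|x_t(i)\|_{V_{t-1}^{-1}}=\min(1/\sqrt{k},\|x_t(i)\|_{V_{t-1}^{-1}})$, which is what allows \expref{exp:bound_x} to apply after enlarging the index set to all of $I_t$.
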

\begin{proof}[Proof of \clmref{clm:r^alg}]
  Recall that $J_t = \{ i \in [N] \mid \|x_t(i)\|_{V_{t-1}^{-1}} > 1/\sqrt{k} \}$ and $J'_t = I_t \cap J_t$.
  We can bound $R^{alg}(T)$ as follows:
  \begin{align*}
    R^{alg}(T)
    &= \sum_{t \in [T]} \sum_{i \in I_t \setminus J'_t} \left( \hat{r}_t(i) - \hat{\theta}_t^\top x_t(i) \right) \\
    &= \sum_{t \in [T]} \sum_{i \in I_t \setminus J'_t} \beta_t(\delta) \|x_t(i)\|_{V_{t-1}^{-1}} \\
    &\le \beta_T(\delta) \sum_{t \in [T]} \sum_{i \in I_t \setminus J'_t} \|x_t(i)\|_{V_{t-1}^{-1}}.
  \end{align*}
  Then, \lemref{lem:bound_x} gives the desired result.
\end{proof}

\begin{claim}\label{clm:r^est}
  \begin{align*}
    R^{est}(T) = O\left( \beta_T(\delta) \sqrt{dkT \log\left(1+ \frac{L^2kT}{d\lambda}\right)} \right)
  \end{align*}
\end{claim}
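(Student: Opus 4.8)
The plan is to bound $R^{est}(T)$ by the standard self-normalized (elliptical potential) argument, but restricted to the ``small-confidence'' arms $I_t \setminus J'_t$, which is exactly where the improvement over \lemref{lem:bound_x_old} is gained.

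First I would work on the high-probability event of \lemref{lem:confidence} --- the same event already invoked in the proof of \thmref{thm:c2ucb_supp} --- on which $\|\hat\theta_t - \theta^*\|_{V_{t-1}} \le \beta_t(\delta)$ holds for all $t \in [T]$ simultaneously, with probability $1 - \delta$. On this event, Cauchy--Schwarz in the inner product induced by $V_{t-1}$ gives $(\hat\theta_t - \theta^*)^\top x_t(i) \le \|\hat\theta_t - \theta^*\|_{V_{t-1}}\|x_t(i)\|_{V_{t-1}^{-1}} \le \beta_t(\delta)\|x_t(i)\|_{V_{t-1}^{-1}}$ for every $i$ and $t$. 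Summing over $i \in I_t \setminus J'_t$ and $t \in [T]$, and using that $\beta_t(\delta)$ is nondecreasing in $t$ (so $\beta_t(\delta) \le \beta_T(\delta)$), I get
\[
  R^{est}(T) \le \beta_T(\delta) \sum_{t \in [T]} \sum_{i \in I_t \setminus J'_t} \|x_t(i)\|_{V_{t-1}^{-1}}.
\]

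Next I would use the definition $J_t = \{ i \in [N] \mid \|x_t(i)\|_{V_{t-1}^{-1}} > 1/\sqrt{k} \}$: by construction every $i \in I_t \setminus J'_t$ has $\|x_t(i)\|_{V_{t-1}^{-1}} \le 1/\sqrt{k}$, hence $\|x_t(i)\|_{V_{t-1}^{-1}} = \min(1/\sqrt{k},\, \|x_t(i)\|_{V_{t-1}^{-1}})$. Because $I_t \setminus J'_t \subseteq I_t$ with $|I_t| = k$, and because the matrix $V_{t-1}$ maintained by the C${}^2$UCB algorithm is precisely $\lambda I + \sum_{s < t}\sum_{i \in I_s} x_s(i)x_s(i)^\top$ (so the hypotheses of \lemref{lem:bound_x} hold after relabeling each pulled set $I_s$ as $[k]$), inequality \expref{exp:bound_x} of \lemref{lem:bound_x} yields
\[
  \sum_{t \in [T]} \sum_{i \in I_t \setminus J'_t} \|x_t(i)\|_{V_{t-1}^{-1}}
  \le \sum_{t \in [T]} \sum_{i \in I_t} \min\!\left(\frac{1}{\sqrt{k}},\, \|x_t(i)\|_{V_{t-1}^{-1}}\right)
  \le \sqrt{2dkT\log(1 + L^2kT/(d\lambda))}.
\]
Combining the two displays gives $R^{est}(T) \le \beta_T(\delta)\sqrt{2dkT\log(1 + L^2kT/(d\lambda))}$, which is the stated bound.

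There is no serious obstacle here; the argument is a routine application of the confidence bound together with the elliptical potential lemma. The only points needing a line of care are (i) pointing out that we condition on the \emph{same} $1-\delta$ event as in \thmref{thm:c2ucb_supp}, so that the failure probability is not double-counted, and (ii) verifying that discarding the arms of $J'_t$ from each per-round sum is harmless, so that \lemref{lem:bound_x}, which is phrased for a full block of $k$ vectors, applies directly to the arms the algorithm actually pulls.
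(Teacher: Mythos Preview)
Your argument is correct and essentially identical to the paper's: apply \lemref{lem:confidence} to get $\|\hat\theta_t-\theta^*\|_{V_{t-1}}\le\beta_t(\delta)$, use Cauchy--Schwarz and monotonicity of $\beta_t(\delta)$, then invoke \lemref{lem:bound_x}. Your additional care in spelling out that $\|x_t(i)\|_{V_{t-1}^{-1}} = \min(1/\sqrt{k},\|x_t(i)\|_{V_{t-1}^{-1}})$ on $I_t\setminus J'_t$ and in noting the shared $1-\delta$ event is welcome but does not depart from the paper's route.
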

\begin{proof}[Proof of \clmref{clm:r^est}]
  Recall that $J_t = \{ i \in [N] \mid \|x_t(i)\|_{V_{t-1}^{-1}} > 1/\sqrt{k} \}$ and $J'_t = I_t \cap J_t$.
  It follows from \lemref{lem:confidence} that
  with probability $1 - \delta$,
  \begin{align*}
    R^{est}(T) &= \sum_{t \in [T]} \sum_{i \in I_t \setminus J'_t} (\hat{\theta}_t - \theta^*)^\top x_t(i) \\
    &\le \sum_{t \in [T]} \sum_{i \in I_t \setminus J'_t} \|\hat{\theta}_t - \theta^*\|_{V_{t-1}} \|x_t(i)\|_{V_{t-1}^{-1}} \\
    &\le \sum_{t \in [T]} \sum_{i \in I_t \setminus J'_t} \beta_t(\delta) \|x_t(i)\|_{V_{t-1}^{-1}} \\
    &\le \beta_T(\delta) \sum_{t \in [T]} \sum_{i \in I_t \setminus J'_t} \|x_t(i)\|_{V_{t-1}^{-1}}.
  \end{align*}
  Applying \lemref{lem:bound_x} to the above, we obtain the desired result.
\end{proof}

\subsection{Proof of \thmref{thm:regret_bound}}

\thmref{thm:regret_bound} is a corollary of the theorem below.
In this subsection, we prove this theorem.
\begin{theorem}
  If $\alpha_t = \beta_t(\delta)$,
  the proposed algorithm has the following regret bound with probability $1 - \delta$:
  \begin{align*}
    R^\alpha(T)
    &= O\left( C \left(
      \beta_T(\delta) \sqrt{dkT \log\left(1 + \frac{L^2kT}{d\lambda}\right)} +
      B dk \log\left(1 + \frac{L^2kT}{d\lambda}\right)
      \right) \right).
  \end{align*}
\end{theorem}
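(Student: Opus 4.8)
The plan is to follow the template of the proof of \thmref{thm:c2ucb_supp}, but to use \asmref{asm:monotonisity} and \asmref{asm:Lipschitz} in place of the combinatorial peeling argument, since those two assumptions are exactly what converts a componentwise optimistic reward vector into an upper bound on $f$ and then back into a sum of per-arm errors. First I would condition on the $1-\delta$ event of \lemref{lem:confidence}, on which $\|\hat\theta_t - \theta^*\|_{V_{t-1}} \le \beta_t(\delta)$ for all $t$. The key observation is that the modified estimate of \algoref{alg:proposed} is still optimistic for every arm: for $i \notin J_t$ we have $\hat r_t(i) - {\theta^*}^\top x_t(i) = \beta_t(\delta)\|x_t(i)\|_{V_{t-1}^{-1}} + (\hat\theta_t-\theta^*)^\top x_t(i) \ge 0$, exactly as in \thmref{thm:c2ucb_supp}, while for $i \in J_t$ we have $\hat r_t(i) = B \ge {\theta^*}^\top x_t(i)$ by the definition of $B$. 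Hence $\hat r_t(i) \ge r_t^*(i)$ for all $i \in [N]$.

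Given that $\hat r_t \ge r_t^*$ componentwise, \asmref{asm:monotonisity} yields $f_{r_t^*, X_t}(I_t^*) \le f_{\hat r_t, X_t}(I_t^*) \le \max_{I \in S_t} f_{\hat r_t, X_t}(I)$, and the $\alpha$-approximation guarantee of the oracle gives $f_{\hat r_t, X_t}(I_t) \ge \alpha \max_{I \in S_t} f_{\hat r_t, X_t}(I) \ge \alpha f_{r_t^*, X_t}(I_t^*)$. Combining these with \asmref{asm:Lipschitz} and the nonnegativity of $\hat r_t(i) - {\theta^*}^\top x_t(i)$ (which lets me drop the absolute values), I would obtain, for each $t$,
\begin{align*}
  \alpha f_{r_t^*, X_t}(I_t^*) - f_{r_t^*, X_t}(I_t)
  &\le f_{\hat r_t, X_t}(I_t) - f_{r_t^*, X_t}(I_t) \\
  &\le C \sum_{i \in I_t}\left(\hat r_t(i) - {\theta^*}^\top x_t(i)\right),
\end{align*}
so that $R^\alpha(T) \le C \sum_{t\in[T]}\sum_{i \in I_t}(\hat r_t(i) - {\theta^*}^\top x_t(i))$.

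The remaining work is to bound the last sum, which I would do by splitting $I_t$ into $I_t \setminus J_t$ and $J'_t := I_t \cap J_t$ as in the proof of \thmref{thm:c2ucb_supp}. For $i \in I_t \setminus J_t$ the error is at most $2\beta_t(\delta)\|x_t(i)\|_{V_{t-1}^{-1}} = 2\beta_t(\delta)\min(1/\sqrt{k},\|x_t(i)\|_{V_{t-1}^{-1}})$, and applying \lemref{lem:bound_x} with the chosen arms $\{x_t(i)\}_{i\in I_t}$ in the role of $\{x_t(i)\}_{i\in[k]}$ (legitimate because in \algoref{alg:proposed} one has $|I_t|=k$ and $V_t = V_{t-1} + \sum_{i\in I_t} x_t(i)x_t(i)^\top$) bounds their total by $2\beta_T(\delta)\sqrt{2dkT\log(1+L^2kT/(d\lambda))}$. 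For $i \in J'_t$ the error is $B - {\theta^*}^\top x_t(i) \le 2B$, and \lemref{lem:bound_num_of_large_x} bounds $\sum_{t\in[T]}|J'_t| \le 2dk\log(1+L^2kT/(d\lambda))$, so this part contributes $O(Bdk\log(1+L^2kT/(d\lambda)))$. Adding the two contributions and multiplying by $C$ gives the stated bound; \thmref{thm:regret_bound} then follows by plugging in $\lambda = R^2S^{-2}d$.

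The only real subtlety — and the reason the algorithmic modification matters — is the second half of the first step: had the original optimistic value $\hat\theta_t^\top x_t(i) + \alpha_t\|x_t(i)\|_{V_{t-1}^{-1}}$ been retained for $i \in J_t$, its error could be as large as $2\beta_t(\delta)L/\sqrt{\lambda}$, and the $\tilde O(dk)$ bound on $\sum_{t\in[T]}|J'_t|$ would only yield a $\tilde O(\beta_T(\delta)(L/\sqrt{\lambda})dk) = \tilde O(LSdk)$ term, matching the potentially sub-optimal $\min(LS,Bk)dk$ term of \thmref{thm:c2ucb} rather than the optimal $Bdk$. Replacing it by the constant $B$ caps the per-arm error at $2B$ regardless of the confidence radius, which — unlike in the proof of \thmref{thm:c2ucb_supp} — also removes any need for a crude per-round $\min(\cdot,2Bk)$ truncation (which would not even be available here, since $f$ need not be bounded). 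I do not anticipate any serious obstacle beyond verifying that the monotonicity and Lipschitz estimates compose cleanly in this order.
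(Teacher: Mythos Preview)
Your proposal is correct and follows essentially the same approach as the paper's proof: establish componentwise optimism of $\hat r_t$, chain \asmref{asm:monotonisity}, the $\alpha$-oracle guarantee, and \asmref{asm:Lipschitz} to reduce $R^\alpha(T)$ to $C\sum_{t,i\in I_t}(\hat r_t(i)-{\theta^*}^\top x_t(i))$, then split on $J'_t$ and apply \lemref{lem:bound_x} and \lemref{lem:bound_num_of_large_x}. The only cosmetic difference is that the paper further decomposes the $I_t\setminus J'_t$ contribution into two pieces $R^{alg}(T)$ and $R^{est}(T)$ (the $\alpha_t\|x_t(i)\|_{V_{t-1}^{-1}}$ term and the $(\hat\theta_t-\theta^*)^\top x_t(i)$ term), each bounded by $\beta_T(\delta)\sum\|x_t(i)\|_{V_{t-1}^{-1}}$, whereas you combine them directly into the single bound $2\beta_t(\delta)\|x_t(i)\|_{V_{t-1}^{-1}}$; the result is the same.
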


\begin{proof}
  Let $J_t = \{ i \in [N] \mid \|x_t(i)\|_{V_{t-1}^{-1}} > 1/\sqrt{k} \}$ and $J'_t = I_t \cap J_t$.
  Similarly to the discussion in the proof of \thmref{thm:c2ucb_supp},
  we have ${\theta^*}^\top x_t(i) \le \hat{r}_t(i)$ for all $i \in [N] \setminus J_t$ and $t \in [T]$ with probability $1 - \delta$.
  Furthermore, from the definition of $B$,
  we also have ${\theta^*}^\top x_t(i) \le \hat{r}_t(i)$ for all $i \in J_t$ and $t \in [T]$.
  Thus, we obtain $\hat{r}_t(i) \ge {\theta^*}^\top x_t(i)$ for all $i \in [N]$ and $t \in [T]$.

  Let $\hat{r}_t = \{\hat{r}_t(i)\}_{i \in [N]}$ for all $t \in [T]$.
  Recall that $r_t^* = \{{\theta^*}^\top x_t(i)\}_{i \in [N]}$.
  Then, we have
  \begin{align*}
    R^\alpha(T)
    &= \sum_{t \in [T]} (\alpha f_{r_t^*, X_t}(I^*_t) - f_{r_t^*, X_t}(I_t)) \nonumber \\
    &\le \sum_{t \in [T]} (\alpha f_{\hat{r}_t, X_t}(I^*_t) - f_{r_t^*, X_t}(I_t)) \nonumber \\
    &\le \sum_{t \in [T]} (f_{\hat{r}_t, X_t}(I_t) - f_{r_t^*, X_t}(I_t)) \nonumber \\
    &\le C \sum_{t \in [T]} \sum_{i \in I_t} (\hat{r}_t(i) - {\theta^*}^\top x_t(i)),
  \end{align*}
  where
  the second inequality is derived from the definition of $I_t$
  and
  the first and third inequalities are derived from Assumptions \ref{asm:monotonisity} and \ref{asm:Lipschitz} and the optimisticity of $\hat{r}_t$ for all $t \in [T]$.
  From the above discussion,
  we obtain
  \begin{align*}
    R^\alpha(T) \le C \left( R^{alg}(T) + R^{est}(T) + R^{con}(T) \right),
  \end{align*}
  where
  \begin{align*}
    R^{alg}(T) &= \sum_{t \in [T]} \sum_{i \in I_t  \setminus J'_t} \left( \hat{r}_t(i) - \hat{\theta}_t^\top x_t(i) \right), \\
    R^{est}(T) &= \sum_{t \in [T]} \sum_{i \in I_t  \setminus J'_t} \left( \hat{\theta}_t - \theta^* \right)^\top x_t(i), \quad \mathrm{and} \\
    R^{con}(T) &= \sum_{t \in [T]} \sum_{i \in J'_t} \left( \hat{r}_t(i) - {\theta^*}^\top x_t(i) \right).
  \end{align*}
  We will bound each of them separately.

  For $R^{con}(T)$,
  it follows from the definition of $B$ and \lemref{lem:bound_num_of_large_x} that
  \begin{align*}
    R^{con}(T)
    &\le 2B \sum_{t \in [T]} |J'_t| \\
    &\le 4B dk\log(1 + L^2kT/(d\lambda)).
  \end{align*}
  Similarly to \clmref{clm:r^alg} and \clmref{clm:r^est}, we obtain
  \begin{align*}
    R^{alg}(T) &= O\left( \beta_T(\delta) \sqrt{dkT \log\left(1+ \frac{L^2kT}{d\lambda}\right)} \right) \quad \mathrm{and} \\
    R^{est}(T) &= O\left( \beta_T(\delta) \sqrt{dkT \log\left(1+ \frac{L^2kT}{d\lambda}\right)} \right).
  \end{align*}
  This completes the proof.
\end{proof}

\end{document}